\documentclass[12pt]{article}
\usepackage{ajl_functions}

\newtheorem{theorem}{Theorem}
\newtheorem{corollary}{Corollary}
\newtheorem{lemma}{Lemma}

\begin{document}


\title{Dimensionality Reduction for Binary Data through the Projection of Natural Parameters}
\author{Andrew J. Landgraf \textsc{and} Yoonkyung Lee\\{\small Department of Statistics, The Ohio State University}}
\date{}
\maketitle

\begin{abstract}
Principal component analysis (PCA) for binary data, known as logistic PCA, has become a popular alternative to dimensionality reduction of binary data. 
It is motivated as an extension of ordinary PCA by means of a matrix factorization, akin to the singular value decomposition, that maximizes the Bernoulli log-likelihood.
We propose a new formulation of logistic PCA which extends Pearson's formulation of a low dimensional data representation with minimum error to binary data. Our formulation does not require a matrix factorization, as previous methods do, but instead looks for projections of the natural parameters from the saturated model. Due to this difference, the number of parameters does not grow with the number of observations and the principal component scores on new data can be computed with simple matrix multiplication. We derive explicit solutions for data matrices of special structure and provide computationally efficient algorithms for solving for the principal component loadings. Through simulation experiments and an analysis of medical diagnoses data, we compare our formulation of logistic PCA to the previous formulation as well as ordinary PCA to demonstrate its benefits. 
\end{abstract}

\textit{Keywords}: Binary data; Exponential family; Logistic PCA; Principal component analysis

\section{Introduction}
\label{sec:1}

Principal component analysis (PCA) is perhaps the most popular dimensionality reduction technique \citep[see][for example]{jolliffe2005principal}. It is useful for data compression, visualization, and feature discovery. 
PCA can be motivated either by maximizing the variance of linear combinations of the variables \citep{hotelling1933pca} or by minimizing the reconstruction error of a lower dimensional projection of the cases \citep{pearson1901pca}.
There is an implicit connection between standard PCA and the Gaussian distribution in Pearson's formulation. 
\cite{tipping1999probabilistic} also showed that PCA provides the maximum likelihood estimate for a factor model, where the data are assumed to be Gaussian. 

Although PCA  is commonly used for dimensionality
reduction for various types of data in practice, the fact that PCA finds a low-rank
subspace by implicitly minimizing the
reconstruction error under the squared error loss renders direct application
of PCA to non-Gaussian data such as binary responses or counts  conceptually unappealing.
Moreover, the probabilistic interpretation of PCA with normal likelihood in
\cite{tipping1999probabilistic} suggests the possibility of proper likelihood-based loss
functions in defining the best subspace of a given rank for other types
of data. With this motivation, 
\cite{CollinsEtAl2001} proposed a generalization of PCA to
exponential family data  using the generalized linear model (GLM) framework,
and \cite{ScheinEtAl2003}, \cite{Tipping1998} and
\cite{Deleeuw2006}  examined similar generalizations for binary data in
particular, using the Bernoulli likelihood, which is  referred to as logistic PCA.
Generalized PCA estimates the natural parameters of a data matrix in a lower dimensional subspace by minimizing the negative log-likelihood under an exponential family distribution. In the Gaussian case, generalized PCA is shown to be equivalent to the truncated singular value decomposition (SVD). 

In this paper, we argue that \cite{CollinsEtAl2001}'s logistic PCA is more closely related to SVD than PCA because it aims at a low-rank factorization of the natural parameters matrix. Consequently, each case has its own latent factor associated with it and the number of parameters increases with the number of cases. The drawback of the formulation becomes apparent when it comes to prediction. To apply logistic PCA to new data, one needs to carry out another matrix factorization, which is prone to overfit. 
This is in contrast with standard PCA where the principal component scores for the new data are simply given by linear combinations of the observed values of the variables.

Retaining the structure of standard PCA,
we generalize PCA in such a way that the principal component scores are linear functions of the data. This is done by interpreting \cite{pearson1901pca}'s formulation in a slightly different manner. A projection of the data with minimum reconstruction error under squared loss can be viewed alternatively as a projection of the {\it natural parameters} of a saturated model with minimum {\it deviance} for Gaussian data. This alternative interpretation allows a coherent generalization of standard PCA to exponential family distributions. When the distribution is Gaussian, this generalization simplifies to standard PCA.
Due to the prevalence of binary data and for simplicity of exposition of the new generalization of PCA, we focus on logistic PCA in this paper.

Our formulation has several benefits over \cite{CollinsEtAl2001}'s formulation. The number of parameters does not increase with the number of observations, the principal component scores are easily interpretable as linear functions of the data, and applying principal components to a new set of data only requires a matrix multiplication. Furthermore, while very little is known about solutions to \cite{CollinsEtAl2001}'s formulation, some explicit solutions to our formulation can be derived for data matrices with special structures.

Computationally, our formulation of logistic PCA, like the previous one, leads to a non-convex problem. We derive two practical algorithms for generating solutions. The first algorithm involves iterative spectral decompositions,
using the majorization-minimization (MM) algorithm \citep[see][]{hunter2004MM}.
The second algorithm involves a convex relaxation of low-rank projection matrices. Using these algorithms, we apply our formulation of logistic PCA to several datasets, examining the advantages and trade-offs with existing methods.

The rest of the paper is organized as follows. Section \ref{sec:background} gives background on PCA and logistic PCA. In Section \ref{sec:formulation}, we introduce our formulation of generalized PCA and qualitatively compare it to the previous formulation. In Section \ref{sec:patterned}, we derive the first-order optimality conditions for logistic PCA solutions and characterize explicit solutions that satisfy the conditions for data matrices with special structures. In Section \ref{sec:computation}, we derive two algorithms for logistic PCA. Section \ref{sec:examples} shows the potential benefits of our formulation via data analyses with simulated and real data. Finally, Section \ref{sec:discussion} concludes the paper with a discussion and possible extensions of the proposed logistic PCA to exponential family data.

\section{Background}
\label{sec:background}
\label{sec:2a}

\cite{pearson1901pca} considered a geometric problem of finding 
an optimal representation of multivariate data in a low dimension with respect to mean squared error. 
 Assume that the data consist of $\bx_i \in \mathbb{R}^d, i = 1, \ldots, n$. 
To project the original $d$-dimensional data into lower dimensions, say, $k<d$, we represent each $\bx$ by $\bmu + \bU\bU^T(\bx-\bmu)$, where $\bmu \in \mathbb{R}^d$ and
$\bU$ is a $d \times k$ matrix with orthonormal columns. 
\cite{pearson1901pca}  showed that the minimum of the mean squared error of the $k$-dimensional representation, 
\begin{equation}
 \mathop{\min}_{\bmu \in \mathbb{R}^d, ~ \bU^T\bU=\bI_k} \sum_{i=1}^n  \| \bx_i - \bmu - \bU\bU^T(\bx_i - \bmu) \|^2 \label{eq:mseopt} 
\end{equation}
is attained when $\bmu$ equals the sample mean and $\bU$ is a matrix with the first $k$ eigenvectors of the sample covariance matrix.

\label{sec:2b}

The MSE criteria is closely linked to a Gaussian assumption. Borrowing the characterization of PCA in \cite{CollinsEtAl2001}, suppose that $\bx_i \sim \mathcal{N}(\btheta_i, \bI_d)$, and 
$\btheta_i$ are constrained to lie in a $k$-dimensional subspace.
That is, $\btheta_i$ are in the span of a  $k$-dimensional orthonormal basis $\{\textbf{b}_l \in \mathbb{R}^d, l=1,\ldots,k\}$ so that $\btheta_i = \sum_{l=1}^k a_{il} \textbf{b}_l$ for some $a_{il}$. In this case, the negative log-likelihood is proportional to 
\begin{equation}
\label{eq:svd}
\sum_{i=1}^n \| \bx_i - \btheta_i \|^2  = \| \bX - \bA\bB^T \|_F^2,
\end{equation}
where $\bX$ is the $n \times d$ matrix with $\bx_i^T$ in the $i$th row, $\bA$ is an $n \times k$ matrix with elements $a_{il}$, $\bB$ is a $d \times k$ matrix made up of the $k$ basis vectors, and 
$\| \cdot \|_F$ is the Frobenius norm. It is well known \citep{eckart1936svd} that 
this objective function is minimized by the rank $k$ truncated singular value decomposition (SVD) of $\bX$, where $\bB$ consists of the first $k$ right singular vectors and $\bA$ consists of
 the first $k$ left singular vectors scaled by the first $k$ singular values or, equivalently, $\bA=\bX\bB$. 
The fact that the columns in $\bB$ span the same subspace as the first $k$ eigenvectors of $\bX^T\bX$ yields
the equivalence in the solutions of PCA and SVD. Note that $\bx_i$ in \eqref{eq:svd} is assumed to be mean-centered. Otherwise, $\bx_i$ is to be replaced with $\bx_i - \bmu$ as in \eqref{eq:mseopt}.

Using the equivalence and the alternative formulation of PCA through the mean space approximation under a Gaussian distribution,
\cite{CollinsEtAl2001} extended PCA to exponential family data. 
Such an extension would be useful for handling  binary, count, or non-negative data that abound in practice
since the Gaussian assumption may be inappropriate. 
The authors proposed to extend PCA to exponential family distributions in a similar way that generalized linear models (GLMs) \citep{mccullagh1989glm} extend linear regression to response variables that are non-Gaussian. Assuming that $\bx_1, \ldots, \bx_n$ are a sample from an exponential family distribution with corresponding natural parameters $\btheta_{1},\ldots,\btheta_{n}$, the goal of generalized PCA is to minimize the negative log-likelihood, subject to the constraint that the estimated natural parameters belong to a $k$-dimensional subspace. This is done by approximating  the $n \times d$ matrix of the natural parameters $\bTheta=[\theta_{ij}]$ by factorization of the form  $\bTheta = \bA \bB^T$,  where $\bA$ and $\bB$ are of rank $k$. 

Based on this generalization of PCA, \cite{ScheinEtAl2003} and \cite{Deleeuw2006} have developed algorithms specifically for binary data. \cite{ScheinEtAl2003} also extended the specification of the natural parameter space in \cite{CollinsEtAl2001} by introducing variable main effects or biases, $\bmu$, so that $\bTheta = \bOne_n \bmu^T+\bA \bB^T$. In an extension of probabilistic PCA \citep{tipping1999probabilistic}, \cite{Tipping1998} proposed a factor model for binary data with a similar structure, where the case-specific factors in $\bA$ are assumed to come from a $k$ dimensional standard normal distribution. The marginal distribution of $\bX$ is then used to define a likelihood of $\bB$ and $\bmu$ and is maximized with respect to $\bB$ and $\bmu$. 
Like other methods, estimating principal component scores on new data requires additional computation to solve for the parameters on the new cases.
More recently, \cite{lee2010sparse} extended sparse PCA to binary data by adding a sparsity-inducing $L_1$ penalty to $\bB$. In recommendation systems, \cite{johnson2014logistic} showed that logistic PCA can capture the latent features of a binary matrix more efficiently than matrix factorization with a squared error loss.


\label{sec:2c}

\section{New Formulation of Generalized PCA}
\label{sec:formulation}

We propose a new formulation of generalized PCA and demonstrate its conceptual and computational advantages over the current formulation. For the new formulation, we begin with a new interpretation of standard PCA as a technique for low dimensional projection of the natural parameters of the {\it saturated model}, which are the same as the data under the normal likelihood.

To elaborate on this perspective, recall that for a data matrix $\bX$, the $d \times k$ matrix with the first $k$ principal component loading vectors minimizes
\begin{equation}
\label{eq:samp_pca} 
\sum_{i=1}^n  \|\bx_i - \bmu - \bU\bU^T(\bx_i - \bmu)\|^2 = \|\bX - \bOne_n\bmu^T - (\bX - \bOne_n\bmu^T)\bU\bU^T \|^2_F
\end{equation}
among all $d\times k$ matrices $\bU$ such that $\bU^T\bU=\bI_k$. The principal component loadings are given by the first $k$ eigenvectors of the sample covariance matrix. 
To draw the connection to the Gaussian model and GLMs, assume that $x_{ij}$ are normally distributed with known variance. With the identity function as the canonical link, the natural parameter $\theta_{ij}$ is the mean itself in this case, and the saturated model, which is the best possible fit to the data, is the model with natural parameters
$\satnat_{ij}=x_{ij}$.

For Gaussian data, the natural parameters of the saturated model are equal to the data and minimizing the squared error in  \eqref{eq:samp_pca}  is equivalent to minimizing the deviance. Hence, standard PCA can be viewed as a technique for minimizing the Gaussian deviance by projecting the natural parameters of the saturated model into a lower dimensional space.

\subsection{Alternative formulation to logistic PCA}
\label{sec:3a}

When the data are binary, assume instead that $x_{ij}$ are from Bernoulli($p_{ij}$). The natural parameter for the Bernoulli distribution is $\theta_{ij} = \logit ~ p_{ij}$.
Let $\satnat$ represent the natural parameters of the saturated model. The saturated model occurs when $p_{ij}=x_{ij}$, which means that
\begin{equation*}
\satnat_{ij} =
  \begin{cases}
   -\infty  & \text{if } x_{ij} = 0 \\
   \infty   & \text{if } x_{ij} = 1
  \end{cases}.
\end{equation*} 

To apply an equivalent principal component analysis to binary data, we need to minimize the Bernoulli deviance by projecting the natural parameters of the saturated model onto a $k$-dimensional space. For convenience, define $q_{ij}=2x_{ij}-1$, which converts the binary variable from taking values in $\{0,1\}$ to $\{-1,1\}$. Let $\bQ=2\bX-\bOne_n \bOne_d^T$ be the matrix with elements $q_{ij}$. For practical purposes, we will approximate $\satnat_{ij}$ by $m\cdot q_{ij}$ for a large number $m$ and show how the choice of $m$ affects the analysis in Section  \ref{sec:4sim:Probs}. Therefore, $\Satnat=m \bQ$ approximates the matrix of natural parameters for the saturated model. 

Define $D(\bX; \bTheta)$ as the deviance of estimated natural parameter matrix $\bTheta$ with the data matrix $\bX$. As in standard PCA, the natural parameters are estimated with a matrix of the form $\bTheta = \bOne_n \bmu^T + (\Satnat - \bOne_n \bmu^T) \bU \bU^T$. The objective function to minimize is the Bernoulli deviance, 
\begin{align*}
D(\bX; \bOne_n \bmu^T + (\Satnat - \bOne_n \bmu^T) \bU \bU^T) & = -2 \left( \log p(\bX; \bOne_n \bmu^T + (\Satnat - \bOne_n \bmu^T) \bU \bU^T) - \log p(\bX; \Satnat) \right) \\
& =  -2 \langle \bX, \bOne_n \bmu^T + (\Satnat - \bOne_n \bmu^T) \bU \bU^T \rangle \\
& + 2 \sum_{i=1}^n \sum_{j=1}^d \log \left( 1 + \exp(\mu_j + [\bU\bU^T (\bsatnat_i - \bmu)]_{j}) \right), \numberthis \label{eq:lpca}
\end{align*}
subject to $\bU^T\bU=\bI_k$, where $\langle \bA, \bB \rangle = tr(\bA^T \bB)$ is the trace inner product.

\subsection{Generalized PCA formulation}
As alluded to in the previous section, this methodology can be extended to any distribution in the exponential family. If $x$ comes from a one-parameter exponential family distribution, then the deviance $D(x; \theta) $ is proportional to $\{ -x \theta + b(\theta) + c(x)\}$,
where $\theta$ is the canonical parameter, and $E(X) = b^\prime(\theta)$. Let $g(\cdot)$ be the canonical link function, such that $g(b^\prime(\theta)) = \theta$. The saturated model with the lowest possible deviance occurs when $\satnat = g(x)$.

For a given distribution, using the appropriate $ b(\theta)$, $g(\cdot)$, and $\satnat$, generalized PCA can be cast as 
\eq{
\mathop{\min}_{\bmu \in \mathbb{R}^d,~ \bU^T\bU = \bI_k} - \langle \bX, \bOne_n \bmu^T + (\Satnat - \bOne_n \bmu^T) \bU \bU^T \rangle 
 + \sum_{i=1}^n \sum_{j=1}^d b \left(\mu_j + [\bU\bU^T (\bsatnat_i - \bmu)]_{j} \right).
}
For example, for the Bernoulli distribution $b(\theta) = \log(1 + \exp(\theta))$ and $\satnat = \logit~x$, for Poisson $b(\theta) = \exp(\theta)$ and $\satnat = \log x$, and for Gaussian $b(\theta) = \theta^2/2$ and $\satnat = x$.

Further, this formulation can handle matrices with multiple types of data. Each column of the saturated natural parameter matrix will correspond to the particular member of the exponential family and $b_j(\theta)$ will change by column.

\subsection{Comparison to previous techniques}
\label{sec:3b}

The main advantage of the proposed formulation is that we only solve for the principal component loadings and not simultaneously for the principal component scores. The previous method for logistic PCA posits that the logit of the probability matrix, logit $\bP$, can be represented by a low-rank matrix factorization
\begin{equation*}
\bTheta = \bA \bB^T,
\end{equation*}
assuming $\bmu = \bZero$ here to simplify exposition.
Our formulation, on the other hand, assumes the logit of the probability matrix has the form 
\begin{equation*}
\bTheta = \Satnat \bU \bU^T.
\end{equation*}

To highlight the difference between the two formulations, we will call the previous formulation logistic SVD (LSVD) and our formulation logistic PCA (LPCA). 
The $d \times k$ principal component loading matrices for LSVD and LPCA are $\bB$ and $\bU$, respectively, and the $n \times k$ matrices of principal component scores are $\bA$ and $\Satnat \bU$.
The loading matrices are comparable, but the score matrices take different forms. The form of $\Satnat \bU$, along with $m$, can act as an implicit regularizer. 
While there is no restriction on how large the elements of $\bA$ can be, the elements of $\Satnat \bU$ are bound between $- m \sqrt{d}$ and $m \sqrt{d}$, for instance.

We illustrate a number of advantages of the alternative formulation. Conceptually, when the main effects are not included in logistic SVD, the cases and variables are treated interchangeably. That is, an analysis of $\bX$ will produce the same low-rank fitted matrix as an analysis of $\bX^T$, and the loadings of $\bX$ will equal the scores of $\bX^T$ and vice versa. Logistic PCA, however, will very likely have a different solution for the respective loadings and scores. Since the intent of PCA is typically to explain the dependence of the variables, we believe that it is desirable to maintain the inherent difference between cases and variables in the analysis.

\label{sec:3b:NewData}
Another difference is that the proposed formulation allows quick and easy evaluation of principal component scores for new data.
Let $\bx^* \in \{0,1\}^d$ be a new observation. We wish to calculate the principal component scores for this new data point assuming that the loadings have already been estimated using another dataset. Let $\hat{\bB}$ and $\hat{\bU}$ be the principal component loadings estimated from the LSVD and LPCA formulations, respectively. To determine the new principal component scores, $\ba^* \in \mathbb{R}^k$, for LSVD, one needs to find
\begin{equation}
\ba^* = \mathop{\argmin}_{\ba \in \mathbb{R}^k} \sum_{j=1}^d \left[-x^*_j \ba^T \hat{\textbf{b}}_{j \cdot} + \log\left( 1 + \exp(\ba^T \hat{\textbf{b}}_{j \cdot}) \right)\right],
\end{equation}
where $\hat{\textbf{b}}_{j \cdot}$ is the $j$th row of $\hat{\bB}$. This is equivalent to a logistic regression problem, where the $d$-dimensional response vector is $\bx^*$ and the $d \times k$ design matrix is $\hat{\bB}$. The $\ba^*$ can be viewed as the coefficient vector that maximizes the likelihood defined through $\bx^*$.

In contrast, the LPCA formulation only requires a matrix multiplication for the new principal component scores:
\begin{equation*}
\hat{\bU}^T \bsatnat^* ,
\end{equation*}
where $\bsatnat^* := m(2\bx^*-1) := m\bq^*$ is taken as the approximate natural parameters for
$\bx^*$ under the saturated model. This process is analogous to computing the principal components for new data in standard PCA, where $\bx^*$ itself acts as $\bsatnat^*$, a set of the natural parameters for the saturated model. 
Further, predicting a low-rank estimate of the natural parameters on a set of new data only requires calculating
\begin{equation}
\hat{\bTheta}^* = \Satnat^* \hat{\bU} \hat{\bU}^T. \label{eq:pred_param}
\end{equation}

Quick evaluation of principal component scores for new data can be particularly useful in a number of situations. For example, principal component regression (or classification) \citep[see][\S 3.5]{ESL} can be extended to logistic PCA when all the covariates are binary. If it is necessary to make predictions for a large amount of new data or predictions are required in real-time, the LSVD method may be too slow. Further, our proposed method will be much more efficient when the number of principal components to retain is chosen by cross validation \citep[\S 6.1.5]{jolliffe2005principal}. In this case, we select the number of components that best reconstruct the dataset on held-out observations, and the cross-validation requires applying logistic PCA to new data repeatedly.

\label{sec:3b:LessParams}

Another major difference between this formulation and the previous one is that the alternative formulation entails much fewer parameters. In particular, LSVD has $kn - \frac{k(k-1)}{2}$ additional parameters in the $\bA$ matrix, if the columns are constrained to be orthogonal to each other. This additional number of parameters could potentially be very large and be ripe for over-fitting. As \cite{welling2008deterministic} discussed, logistic SVD can be viewed as an estimation method for a factor model. Instead of marginalizing over the case-specific factors, which are latent variables in factor analysis, logistic SVD takes a degenerate approach to computing point estimates for them. Since the number of latent factors is proportional to the number of observations, overfitting can easily occur.

\label{sec:3d:m} 
The alternative formulation of logistic PCA does have an additional parameter, $m$, that previous formulations do not. 
We treat $m$ as a tuning parameter. As $m$ gets larger, the elements of $\Satnat$ will get closer to $\pm \infty$ and therefore the estimated probabilities will be close to 0 or 1. Conversely, if $m$ is small, the probability estimates will be close to $0.5$. If the user has domain knowledge of the range of the likely probabilities, they can use this to guide their choice of $m$. We have found that cross validation is an effective way to choose $m$. Simulations in Section \ref{sec:4sim} show the potential benefits of correctly choosing $m$.

Because LSVD has many more parameters than LPCA given a rank, LSVD is guaranteed to have a lower in-sample deviance. Despite this, the simulations in Section \ref{sec:4sim} show that LPCA can do just as well or better at estimating the true probabilities if $m$ and $k$ are chosen properly. Further, the loadings learned in LSVD may not generalize as well as those learned in LPCA, as the example in Section \ref{sec:overfit} exhibits.


\subsection{Number of Principal Components}
\label{sec:3d}
\label{sec:3d:PCs}

Selecting the appropriate dimensions for effective data representation is a common issue for dimensionality reduction techniques. There has been relatively little discussion previously in the literature of how to select the number of PCs in logistic PCA. \cite{lee2010sparse} derived a BIC heuristic to select the degree of sparsity for sparse logistic PCA and \cite{li2010simple} proposed a Bayesian version of exponential family PCA with a prior on the loadings that controls the number of principal components. We propose a few methods for selection of dimensionality in logistic PCA, motivated by the current practices in standard PCA and the dual interpretation of squared error as the deviance for a Gaussian model. 

One common approach in standard PCA is to look at the cumulative percent of the variance explained and select the number of components such that a chosen proportion, $\gamma$, is met or exceeded. Let $\hat{\bU}_k$ be the rank $k$ estimate of the principal component loadings. The criteria will choose a rank $k$ model if $k$ is the smallest integer such that
\[1 - \|\bX - (\bOne_n \hat{\bmu}^T + (\bX - \bOne_n \hat{\bmu}^T) \hat{\bU}_k \hat{\bU}_k^T)\|_F^2 / \|\bX - \bOne_n \hat{\bmu}^T\|_F^2 > \gamma.\]

Similarly for logistic PCA, if $D(\bX; \hat{\bTheta}_k)$ is the Bernoulli deviance of the rank-$k$ principal component loadings, $\hat{\bTheta}_k = \bOne_n \hat{\bmu}^T + (\Satnat - \bOne_n \hat{\bmu}^T) \hat{\bU}_k \hat{\bU}_k^T$, with the data $\bX$, then we could choose the smallest integer $k$ such that
\begin{equation*}
1-\frac{D(\bX; \hat{\bTheta}_k)}{D(\bX; \bOne_n \hat{\bmu}^T)} > \gamma.
\end{equation*}
This criterion has a similar interpretation as in standard PCA that at least $100 \gamma$\% of the deviance is explained by $k$ principal component loadings. Notice that, as expected, 100\% of the deviance will be explained by $d$ components because $\hat{\bU}_d \hat{\bU}_d^T=\bI_d$ and $D(\bX; \Satnat)=0$ by definition.

Another approach from standard PCA is to create a scree plot of the percent of variance explained by each component and look for an elbow in the plot. The same analogy can be made to logistic PCA, however with a modified definition of the percent of reduction in deviance for additional components. For logistic PCA, the principal component loadings matrices are not necessarily nested, meaning the first $k-1$ columns of $\hat{\bU}_k$ do not necessarily equal $\hat{\bU}_{k-1}$. For the reason, it would be more appropriate to define the {\it marginal} percentage of deviance explained by the additional $k$th component as
\begin{equation*}
\frac{D(\bX; \hat{\bTheta}_{k-1})-D(\bX; \hat{\bTheta}_k)}{D(\bX; \bOne_n \hat{\bmu}^T)}.
\end{equation*}
If $d$ is large, these non-sequential procedures could potentially take very long. We recommend only calculating these quantities until the cumulative percent of deviance explained is fairly high, which may be much smaller than $d$ in many situations.

\subsection{Geometry of the Projection}
\begin{figure}[t]
	\centering
		\includegraphics[scale=0.56]{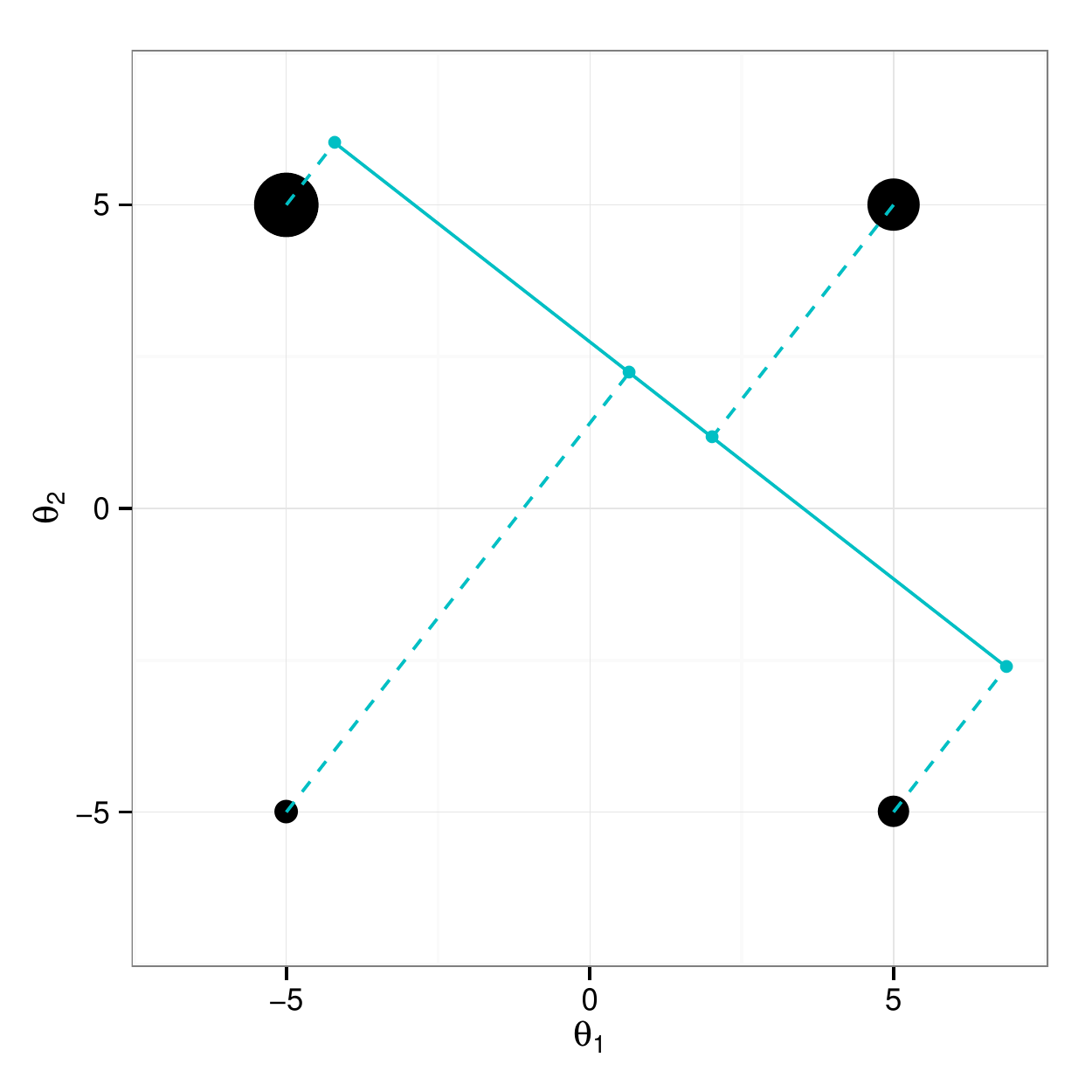}\includegraphics[scale=0.56]{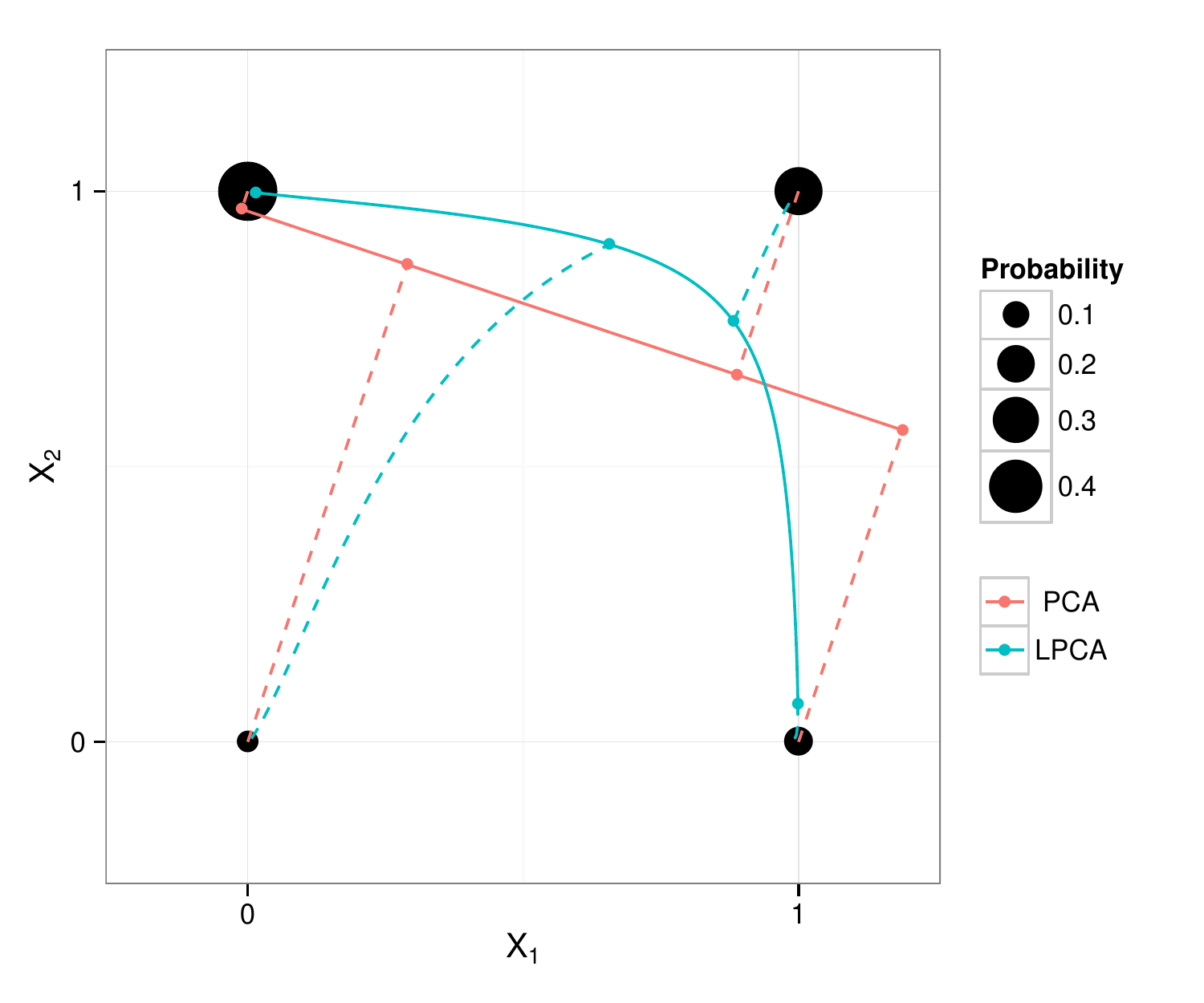}
	\caption{Logistic PCA projection of a two-dimensional Bernoulli distribution in the natural parameter space (left) and in the probability space (right) with $m=5$ compared to the PCA projection}
	\label{fig:lpca_proj}
\end{figure}

The geometry of logistic principal components is illustrated with a distribution of two Bernoulli random variables.
With two dimensions ($d = 2$), there are only four possible responses as indicated by the four points in the right panel of
Figure \ref{fig:lpca_proj}.  The areas of the points in the figure are proportional to the probabilities that are randomly assigned to them. The probabilities sum to one and specify the distribution, which is taken as a population version of data. 

The first step of the proposed logistic PCA is to transform the data into the natural parameters of the saturated model, which is what is shown on the left panel of Figure \ref{fig:lpca_proj}. Here, $m = 5$ was used to represent the saturated model. The one-dimensional linear projection is then performed in the natural parameter space, and is chosen to minimize the Bernoulli deviance. In this example, we minimized the deviance by a grid search since there is only one free parameter for the projection. The solid line in the left panel is the rank-one space for logit parameters, the dots on the line are the projected values, and the dashed lines show the correspondence between the saturated model parameters and their projections.

Next, we transform this linear projection to the probability space by taking the inverse logit of the values, which is shown as the blue line in the right panel of Figure \ref{fig:lpca_proj}. The one-dimensional space for the probabilities induced by logistic PCA is now non-linear and the projections are no longer orthogonal. Also shown on this plot is the  one-dimensional space (red line) from standard PCA. The projections from standard PCA are not constrained to be between 0 and 1, as is seen for the projections of $(1, 0)$ and $(0, 1)$.  In addition, the fitted probabilities from logistic PCA seem to be closer to the original data values than standard PCA for all except $(0,0)$, which has the smallest  probability.

\section{Logistic PCA for Patterned Data}
\label{sec:patterned}
The properties of standard PCA solutions are well understood algebraically. Under certain assumptions, the solutions are explicitly known. For example, when the variables are uncorrelated, the loadings are the standard bases and the principal components are ordered from highest variance to lowest. In contrast, not much is known about the solutions of logistic PCA or logistic SVD. To obtain analogous results for patterned data with logistic PCA, we derive necessary conditions for the solutions first, and find solutions that satisfy (or nearly satisfy) these optimality conditions under different sets of assumptions on data matrices. These results help us gain a better understanding of logistic PCA.

\subsection{First-Order Optimality Conditions}
\label{sec:3c:Grad}
To enforce orthonormality $\bU^T\bU=\bI_k$, we add a constraint to the objective in \eqref{eq:lpca} via the method of Lagrange multipliers. The Lagrangian is 
\eq{
L(\bU, \bmu,  \Lambda) = D(\bX; \bOne_n \bmu^T + (\Satnat - \bOne_n \bmu^T) \bU \bU^T) + tr \left( \Lambda (\bU^T\bU - \bI_k) \right),
}
where $\Lambda$ is a $k \times k$ symmetric matrix of Lagrange multipliers \citep{wen2013ortho}. 

Taking the gradient of the Lagrangian with respect to $\bU$, $\bmu$, and $\Lambda$ and setting them equal to $\bZero$, we obtain the first-order optimality conditions for the solution of logistic PCA:
\begin{flalign}
 & ~ \left[\left(\bX - \hat{\bP}\right)^T \left(\Satnat - \bOne_n \bmu^T \right) + \left(\Satnat - \bOne_n \bmu^T \right)^T \left(\bX - \hat{\bP}\right) \right] \bU = \bU \Lambda & \label{eq:lpca_deriv} \\
 & ~ \left(\bI_d - \bU \bU^T \right) \left( \bX - \hat{\bP} \right)^T \bOne_n  =  \bZero_d \text{, and} \label{eq:lpca_deriv_mu} \\
 & ~~ \bU^T \bU = \bI_k. \label{eq:ortho}
\end{flalign}
The matrix $\hat{\bP}$ has the estimate of $p_{ij}$ at $\bU$ and $\bmu$, $\hat{p}_{ij} = \sigma \left( \mu_j + [\bU\bU^T (\bsatnat_i - \bmu)]_{j} \right)$, as its $ij$th element, where $\sigma(\theta)=\frac{\exp(\theta)}{1 + \exp(\theta)}$ is
the inverse logit (or sigmoid) function.
The details of the calculation can be found in Appendix \ref{app:deriv_lpca}. For the following sections, let 
\[\bC^m := \left[\left(\bX - \hat{\bP}\right)^T \left(\Satnat - \bOne_n \bmu^T \right) + \left(\Satnat - \bOne_n \bmu^T \right)^T \left(\bX - \hat{\bP}\right) \right],\] which is labeled to explicitly state the dependence of $\Satnat$ and $\hat{\bP}$ on $m$.

Applying the Lagrangian method of multipliers to the standard PCA formulation \eqref{eq:mseopt} yields $\left[ (\bX - \bOne_n \bmu^T)^T (\bX - \bOne_n \bmu^T)\right] \bU = \bU \Lambda$ as part of the first-order optimality conditions, which is very similar to the form of equation \eqref{eq:lpca_deriv}. Unlike standard PCA, \eqref{eq:lpca_deriv} is nonlinear in $\bU$ and the solution is not known in closed form through an eigen-decomposition because the matrix in the left hand side of \eqref{eq:lpca_deriv} depends on $\bU$ through $\hat{\bP}$.
However, we can derive some explicit results for special cases using the optimality conditions.

\subsection{Independence}
There are a few natural extremes of data dependence that a given binary dataset can exhibit.
On one end of the spectrum, all of the columns can be in the span of a single vector. In this case, all the columns are equal to each other, and the dataset can be reconstructed with arbitrary precision with a rank-one approximation.

The opposite end of the spectrum is when the variables are independent of each other. In standard PCA, this implies that the covariance matrix is diagonal, so the principal component loadings are the standard basis vectors. Analogous to standard PCA results, we show below that, if the $l$th column of a dataset is uncorrelated with the other $d - 1$ columns and its column mean is $\frac{1}{2}$ then the $l$th standard basis vector, $\be_l$, satisfies the first-order optimality conditions of logistic PCA with $k = 1$. If its column mean is not equal to $\frac{1}{2}$, $\be_l$ can nearly satisfy the optimality conditions with large enough $m$.

Let $X_j$ be the length $n$ vector of the $j$th column of $\bX$ and $\bar{X}_j = \bOne_n^T X_j / n$ be the corresponding column mean.

\begin{theorem} \label{thm:ind}
Assume that $X_l^T X_j / n = \bar{X}_l \bar{X}_j$, for all $j \neq l$, i.e. the $l$th variable is uncorrelated with all other variables.
\begin{itemize}
\item[(i)] If $\bar{X}_l = \frac{1}{2}$, then $\bu = \be_l$, the $l$th standard basis vector, satisfies the first-order optimality conditions of logistic PCA, regardless of $m$. That is,
\eq{
\bC^m \be_l - \lambda_m \be_l = \bZero,
}
for some $\lambda_m$.

\item[(ii)] If $\bar{X}_l \neq \frac{1}{2}$, then the first-order optimality conditions can be satisfied as close as desired with $\bu = \be_l$ for $m$ large enough. Formally, for any $\epsilon>0$, there exists $m_0$ such that, for all $m > m_0$,
\eq{
\| \bC^m \be_l - \lambda_m \be_l \|^2 < \epsilon,
}
with $\lambda_m = 0$.
\end{itemize} 
\end{theorem}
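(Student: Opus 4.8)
The plan is to verify the three optimality conditions \eqref{eq:lpca_deriv}--\eqref{eq:ortho} directly at $\bu=\be_l$. Condition \eqref{eq:ortho} is immediate since $\be_l^T\be_l=1$. For condition \eqref{eq:lpca_deriv_mu} I would first work out the fitted probabilities when $\bU\bU^T=\be_l\be_l^T$: because this projection retains only the $l$th coordinate, one gets $\hat{p}_{ij}=\sigma(\mu_j)$ for every $j\neq l$ (constant in $i$) and $\hat{p}_{il}=\sigma(m q_{il})$, the latter free of $\mu_l$. Choosing $\mu_j=\logit\bar{X}_j$ for $j\neq l$ then forces $\sum_i(x_{ij}-\hat{p}_{ij})=n(\bar{X}_j-\sigma(\mu_j))=0$, and since premultiplication by $\bI_d-\be_l\be_l^T$ annihilates the $l$th coordinate, \eqref{eq:lpca_deriv_mu} holds while $\mu_l$ stays free (it never enters $\hat{\bP}$). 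The substance of the theorem is therefore \eqref{eq:lpca_deriv}, namely showing that $\bC^m\be_l$ is exactly (part (i)) or nearly (part (ii)) a multiple of $\be_l$.

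The computational heart is an exact residual identity in the $l$th column. Writing $\delta_m:=\sigma(-m)=1/(1+\exp(m))$, a case check on $x_{il}\in\{0,1\}$ gives $x_{il}-\sigma(m q_{il})=q_{il}\delta_m$, so the $l$th-column residuals are proportional to $q_{il}$ with a factor decaying like $\exp(-m)$. Next I would expand $\bC^m\be_l=\bR^T(\bS\be_l)+\bS^T(\bR\be_l)$ with $\bR:=\bX-\hat{\bP}$ and $\bS:=\Satnat-\bOne_n\bmu^T$, and evaluate it entrywise. For $j\neq l$ the first piece vanishes: using $\hat{p}_{ij}=\bar{X}_j$ and $q_{il}=2x_{il}-1$, the relevant sum reduces to $\sum_i(x_{ij}-\bar{X}_j)x_{il}=X_j^TX_l-n\bar{X}_j\bar{X}_l$, which is $0$ by the uncorrelatedness hypothesis $X_l^TX_j/n=\bar{X}_l\bar{X}_j$. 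The second piece reduces, via the residual identity together with $\sum_i q_{ij}q_{il}=n(2\bar{X}_j-1)(2\bar{X}_l-1)$ (again from the hypothesis), to $n\delta_m(2\bar{X}_l-1)\big[m(2\bar{X}_j-1)-\mu_j\big]$.

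From here the two cases split cleanly. For part (i), $\bar{X}_l=\frac{1}{2}$ kills the factor $(2\bar{X}_l-1)$, so every off-$l$ entry of $\bC^m\be_l$ is exactly $0$; hence $\bC^m\be_l=\lambda_m\be_l$ with $\lambda_m=[\bC^m\be_l]_l$, which the same $j=l$ computation records explicitly as $\lambda_m=2nm\delta_m=2nm/(1+\exp(m))$, valid for every $m$. For part (ii), $\bar{X}_l\neq\frac{1}{2}$; I would take $\lambda_m=0$ and bound the whole vector. Each off-$l$ entry is $n\delta_m(2\bar{X}_l-1)[m(2\bar{X}_j-1)-\mu_j]=O(m\exp(-m))$, and the $l$th entry evaluates to $[\bC^m\be_l]_l=2n\delta_m\big(m-\mu_l(2\bar{X}_l-1)\big)=O(m\exp(-m))$ as long as $\mu_l$ is held bounded. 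Summing squared entries gives $\|\bC^m\be_l\|^2=O(m^2\exp(-2m))\to 0$, so for any $\epsilon>0$ a threshold $m_0$ exists beyond which $\|\bC^m\be_l\|^2<\epsilon$.

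The main difficulty is bookkeeping rather than conceptual: the residual identity $x_{il}-\sigma(m q_{il})=q_{il}\delta_m$ and the two mean-product reductions of $\sum_i(x_{ij}-\bar{X}_j)x_{il}$ and $\sum_i q_{ij}q_{il}$ are exactly what collapse the cross terms, and one must check that the only surviving $m$-dependence sits in the exponentially small $\delta_m$ multiplied by an at-most-linear-in-$m$ bracket, so that the $O(m\exp(-m))$ decay is genuine. The one subtlety worth flagging is the free parameter $\mu_l$: it never affects $\hat{\bP}$, so it is irrelevant to part (i), but for the quantitative bound in part (ii) it should be fixed at a bounded value (for instance $\logit\bar{X}_l$) to keep the $l$th entry $O(m\exp(-m))$.
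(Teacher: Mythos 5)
Your proposal is correct and follows essentially the same route as the paper's own proof: fix $\mu_j=\logit\bar{X}_j$, compute $\bC^m\be_l$ entrywise, kill the first cross-term by the uncorrelatedness hypothesis, and reduce the second via the residual identity $x_{il}-\sigma(mq_{il})=q_{il}/(1+e^m)$ to obtain $\lambda_m=2nm/(1+e^m)$ in part (i) and an $O(m e^{-m})$ bound in part (ii). The only difference is cosmetic: you explicitly verify the $\bmu$ stationarity condition and make the decay rate quantitative, both of which the paper leaves implicit.
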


The proof is given in Appendix \ref{sec:ind_proof}. If multiple columns are uncorrelated with the remaining columns, this result easily generalizes to larger $k$. For example, if  $k$ columns are uncorrelated with all other columns, then a rank $k$ solution comprising of the the corresponding $k$ standard basis vectors can be made arbitrarily close to (or exactly if the column means equal $\frac{1}{2}$) satisfying the necessary conditions \eqref{eq:lpca_deriv}--\eqref{eq:ortho}.

This leads to a natural question: when there are multiple candidate solutions, which one decreases the deviance the most?

\begin{theorem} \label{thm:order}
For logistic PCA with $k = 1$, the standard basis vector which decreases deviance the most is the one corresponding to column with mean closest to $\frac{1}{2}$.
\end{theorem}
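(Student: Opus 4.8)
The plan is to compute, in closed form, the Bernoulli deviance attained by each candidate solution $\bu = \be_l$ and then compare these values across $l$. The key simplification is that when $\bu = \be_l$, the rank-one projection $\bU\bU^T = \be_l\be_l^T$ acts only on the $l$th coordinate: the fitted natural parameter matrix has $\theta_{il} = \mu_l + (\satnat_{il} - \mu_l) = m q_{il}$ in column $l$ and $\theta_{ij} = \mu_j$ in every other column $j \neq l$. In other words, column $l$ is fitted by the saturated model $\Satnat$, while all remaining columns are fitted by main effects only; note in particular that $\mu_l$ cancels and does not influence the fit.

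First I would evaluate the two pieces of the deviance separately. Since column $l$ is fitted exactly by $\satnat_{il} = m q_{il}$, its contribution to $D(\bX; \bTheta)$ is zero by the definition of deviance relative to the saturated model. For each remaining column $j \neq l$, minimizing its deviance over the free main effect $\mu_j$ gives the intercept-only fit $\sigma(\hat{\mu}_j) = \bar{X}_j$, i.e. $\hat{\mu}_j = \logit~\bar{X}_j$, which is exactly the null-model deviance of column $j$; call it $D_j^{\mathrm{null}}$. Hence the total deviance at $\bu = \be_l$ equals $\sum_{j \neq l} D_j^{\mathrm{null}}$, and the reduction relative to the main-effects-only model $\bOne_n\hat{\bmu}^T$ (whose deviance is $\sum_j D_j^{\mathrm{null}}$) is exactly $D_l^{\mathrm{null}}$. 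Thus choosing $\be_l$ amounts to removing the null deviance of column $l$ alone, so maximizing the deviance reduction is the same as choosing $l$ with the largest $D_l^{\mathrm{null}}$.

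Next I would express $D_l^{\mathrm{null}}$ as a function of the column mean. Writing $n\bar{X}_l$ and $n(1-\bar{X}_l)$ for the counts of ones and zeros in column $l$, a direct calculation gives $D_l^{\mathrm{null}} = 2 n H(\bar{X}_l) + 2 n \log \sigma(m)$, where $H(p) := -p\log p - (1-p)\log(1-p)$ is the binary entropy and $2n\log\sigma(m)$ is the saturated-model offset from approximating $\satnat$ by $m\bQ$. Since this offset does not depend on $l$, comparing the $D_l^{\mathrm{null}}$ reduces to comparing $H(\bar{X}_l)$. Finally I would invoke the standard properties of $H$: it is symmetric about $\tfrac{1}{2}$, strictly increasing on $(0,\tfrac{1}{2})$ and strictly decreasing on $(\tfrac{1}{2},1)$, so that $H(p) > H(p')$ precisely when $|p - \tfrac{1}{2}| < |p' - \tfrac{1}{2}|$. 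Therefore the column whose mean is closest to $\tfrac{1}{2}$ yields the largest deviance reduction, which is the claim.

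I do not anticipate a genuine obstacle once the projection structure collapses the problem to a per-column comparison; the remaining work is the elementary deviance computation together with the monotonicity of binary entropy in the distance of the mean from $\tfrac{1}{2}$. The only points requiring care are verifying that $\mu_l$ truly drops out of the column-$l$ fit, so that column is fitted saturated regardless of $\mu_l$, and confirming that the reduction is exactly a single column's null deviance rather than a quantity entangled across columns.
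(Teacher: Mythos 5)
Your proposal is correct and takes essentially the same route as the paper's proof: both observe that with $\bu=\be_l$ the $l$th column contributes an $l$-independent constant while each other column contributes its intercept-only (binary entropy) deviance $2nH(\bar X_j)$, so the comparison reduces to the monotonicity of $H$ in $|\bar X_l - \tfrac12|$. Your bookkeeping via the reduction $D_l^{\mathrm{null}}$ relative to the main-effects-only model is just a reframing of the paper's direct minimization of $\sum_{j\neq l} 2nH(\bar X_j)$.
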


The proof is given in Appendix \ref{sec:order_proof}. This result corresponds to the ordering of variables by variance in standard PCA. The variables with means closest to $\frac{1}{2}$ have the largest variance. If the variables are independent, the variance explained will be largest with a standard basis vector corresponding to the variable with largest variance. Similar to the previous theorem, this theorem can be easily extended to $k$ larger than 1. In this case, the loading matrix made up of the the standard basis vectors corresponding to the $k$ columns that are closest to $\frac{1}{2}$ will decrease the deviance the most out of all loading matrices that comprise of standard basis vectors.

\subsection{Compound symmetry}
With independence and perfect correlation being two extremes of the structure of the data, somewhere in the middle is compound symmetry. A covariance matrix $\Sigma$ is compound symmetric if the diagonals are constant ($\Sigma_{jj} = c_1$ for all $j$) and the off-diagonals are all equal to each other ($\Sigma_{jk} = c_2$ for all $j \neq k$). The compound symmetry of $\Sigma$ implies equal correlations among the variables. If this is the case, $\frac{1}{\sqrt{d}}\bOne_d$ is an eigenvector of the covariance matrix. We show that, under more limiting conditions, $\frac{1}{\sqrt{d}}\bOne_d$ satisfies the optimality conditions for logistic PCA when $\bQ^T\bQ$ is compound symmetric. 

$\bQ^T\bQ$ has a natural interpretation. The diagonals always equal $n$ and the $jk$th off-diagonal is the number of records in which the $j$th and $k$th variables are the same minus the number of records in which the $j$th and $k$th variables differ. It measures how much the $j$th and $k$th variables agree with each other, and can range from $-n$ (total disagreement) to $n$ (total agreement). $\bQ^T\bQ$ is compound symmetric if all the bivariate agreements are the same.

\begin{theorem} \label{thm:compound}
Consider logistic PCA without main effects, where $\bmu = \bZero$. Assume that $\bQ^T\bQ$ is compound symmetric. If $\beta$'s exist such that the following condition is satisfied,
\begin{equation}
\sigma \left( \frac{m}{d} \sum_{l \not \in \{j,k\}} q_{il} \right) = \frac{1}{2} + \sum_{l \not \in \{j,k\}} q_{il} \beta_{jk,l}, ~ \mbox{for all } 
j \ne k, ~j, k =1,\cdots,d, \mbox{ and } i=1,\cdots,n
\label{eq:cs_cond}
\end{equation}
then $\bu = \frac{1}{\sqrt{d}}\bOne_d$ satisfies the first-order optimality conditions for logistic PCA, as characterized by equations \eqref{eq:lpca_deriv}--\eqref{eq:ortho}.
\end{theorem}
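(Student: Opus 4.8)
The plan is to exploit that this is a rank-one problem ($\bu = \tfrac{1}{\sqrt d}\bOne_d$, so $k=1$) with $\bmu$ held fixed at $\bZero$. Consequently only the stationarity condition \eqref{eq:lpca_deriv} and the normalization \eqref{eq:ortho} need to be checked; \eqref{eq:lpca_deriv_mu} arises from differentiating in $\bmu$ and is inactive once main effects are excluded. Condition \eqref{eq:ortho} is immediate, since $\bu^T\bu = \tfrac1d\bOne_d^T\bOne_d = 1$. For $k=1$ the matrix $\Lambda$ is a scalar, so \eqref{eq:lpca_deriv} reduces to showing that $\bu$ is an eigenvector of $\bC^m$, i.e. $\bC^m\bOne_d \propto \bOne_d$. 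First I would evaluate $\hat{\bP}$ at this $\bu$: since $\bu\bu^T = \tfrac1d\bOne_d\bOne_d^T$ and $\Satnat = m\bQ$, every column of the fitted natural-parameter matrix equals $\tfrac{m}{d}\sum_l q_{il}$, so $\hat p_{ij} = \sigma(\tfrac{m}{d}\sum_l q_{il})=:\hat p_i$ is constant across columns $j$.

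Next I would obtain an explicit form for $\bC^m$. Substituting $\Satnat - \bOne_n\bmu^T = m\bQ$ and $x_{ij} = (q_{ij}+1)/2$ into the definition of $\bC^m$, a routine simplification gives, writing $g_j := \sum_i(\tfrac12-\hat p_i)q_{ij}$, that $C^m_{jk} = m(\bQ^T\bQ)_{jk} + m(g_j + g_k)$; equivalently $\bC^m = m\bQ^T\bQ + m(\bOne_d\mathbf g^T + \mathbf g\bOne_d^T)$ with $\mathbf g$ the vector of $g_j$. Applying $\bC^m$ to $\bOne_d$ and using that $\bQ^T\bQ$ has constant row sums $n + (d-1)c_2$ by compound symmetry, the only term that can fail to be a multiple of $\bOne_d$ is $md\,\mathbf g$. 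Hence the whole problem collapses to a single claim: $g_j$ is the same for every $j$.

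The heart of the argument — and the step I expect to be the main obstacle — is proving $g_j = g_k$. Here I would form $g_j - g_k = \sum_i(\tfrac12-\hat p_i)(q_{ij}-q_{ik})$ and make the key observation that only rows with $q_{ij}\neq q_{ik}$ contribute. For exactly those rows $q_{ij}+q_{ik}=0$, so the full projection sum collapses, $\sum_l q_{il} = \sum_{l\notin\{j,k\}}q_{il}$, and therefore $\hat p_i = \sigma(\tfrac{m}{d}\sum_{l\notin\{j,k\}}q_{il})$ — precisely the quantity appearing in the hypothesis \eqref{eq:cs_cond}. This collapse is what explains, and makes usable, the specific exclusion of $\{j,k\}$ in the condition. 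Invoking \eqref{eq:cs_cond} then replaces $\tfrac12 - \hat p_i$ by the affine expression $-\sum_{l\notin\{j,k\}}q_{il}\beta_{jk,l}$, turning the nonlinear sum into a linear one.

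Finally I would collect terms. Because $q_{il}(q_{ij}-q_{ik})$ already vanishes when $q_{ij}=q_{ik}$, the inner summation may be extended over all rows, yielding $g_j - g_k = -\sum_{l\notin\{j,k\}}\beta_{jk,l}\big[(\bQ^T\bQ)_{lj} - (\bQ^T\bQ)_{lk}\big]$. For every $l\notin\{j,k\}$ both $(\bQ^T\bQ)_{lj}$ and $(\bQ^T\bQ)_{lk}$ are off-diagonal entries, hence equal by compound symmetry, so each bracket is zero and $g_j = g_k$. With $g_j \equiv g$ constant, $\bC^m = m\bQ^T\bQ + 2mg\,\bOne_d\bOne_d^T$ is itself compound symmetric, so $\bOne_d$ (and thus $\bu$) is an eigenvector with eigenvalue $\lambda = m(n+(d-1)c_2) + 2mgd$, establishing \eqref{eq:lpca_deriv} with $\Lambda = \lambda$ and completing the verification.
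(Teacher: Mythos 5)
Your proposal is correct and follows essentially the same route as the paper: the decisive step in both is that only rows with $q_{ij}\neq q_{ik}$ contribute, for which the projection sum collapses to $\sum_{l\notin\{j,k\}}q_{il}$ so that condition \eqref{eq:cs_cond} linearizes $\hat p_i$ and compound symmetry of $\bQ^T\bQ$ kills the resulting terms. Your explicit decomposition $\bC^m=m\bQ^T\bQ+m(\bOne_d\mathbf g^T+\mathbf g\bOne_d^T)$ is only a cleaner packaging of the paper's separate verifications that the diagonal and off-diagonal entries of $\bC^m$ are each constant, reducing both to the single claim $g_j=g_k$.
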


The proof is given in Appendix \ref{sec:cs_proof}. Equation \eqref{eq:cs_cond} states that the fitted probabilities can be represented as an affine function of the $q_{ij}$s. When there are fewer columns,
the condition is more likely to be met.

\begin{corollary}
If $d \leq 4$ and $\bQ^T\bQ$ is compound symmetric, then \eqref{eq:cs_cond} is satisfied and therefore $\bu = \frac{1}{\sqrt{d}}\bOne_d$ satisfies the first-order optimality conditions. When $d = 2$, $\bQ^T\bQ$ is always compound symmetric.
\end{corollary}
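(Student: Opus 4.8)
The plan is to reduce the corollary to verifying the single hypothesis of Theorem~\ref{thm:compound}, namely condition~\eqref{eq:cs_cond}: once I exhibit $\beta$'s satisfying \eqref{eq:cs_cond} for every $d \le 4$, the conclusion that $\bu = \frac{1}{\sqrt{d}}\bOne_d$ satisfies the first-order optimality conditions is immediate from that theorem. The key structural observation is that, for a fixed pair $(j,k)$, the sum in \eqref{eq:cs_cond} ranges over exactly $s := d-2$ columns, and since each $q_{il} \in \{-1,1\}$ the left-hand side $\sigma\!\left(\frac{m}{d}\sum_{l\notin\{j,k\}} q_{il}\right)$ depends on these entries only through their sum. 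Thus \eqref{eq:cs_cond} amounts to requiring that a sigmoid of a linear form in $s$ sign variables coincide with an affine form in the same variables, and this must hold across all $2^s$ sign patterns. I would then dispatch the three admissible values $s = 0,1,2$ in turn.

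For $d = 2$ (so $s = 0$) the sum is empty, the left-hand side is $\sigma(0) = \tfrac12$, and the right-hand side is $\tfrac12$ with no $\beta$'s required, so \eqref{eq:cs_cond} holds trivially; moreover $\bQ^T\bQ$ is $2\times2$ with both diagonal entries equal to $n$ and a single off-diagonal value (forced equal across the two symmetric positions), hence automatically compound symmetric. For $d = 3$ (so $s = 1$) there is a unique index $l_0 \notin \{j,k\}$, and \eqref{eq:cs_cond} reads $\sigma(\tfrac{m}{d} q_{il_0}) = \tfrac12 + q_{il_0}\beta_{jk,l_0}$; because $q_{il_0} = \pm 1$ and $\sigma(-x) = 1 - \sigma(x)$, the two sign choices yield the \emph{same} requirement $\beta_{jk,l_0} = \sigma(m/d) - \tfrac12$, so a valid $\beta$ exists.

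The case $d = 4$ (so $s = 2$) is where the argument has genuine content. Two indices $l_1, l_2 \notin \{j,k\}$ remain, and the sum $q_{il_1} + q_{il_2}$ attains only the three values $-2, 0, 2$. I would write out the corresponding three equations: the vanishing middle value (attained when $q_{il_1} = -q_{il_2}$) forces $\beta_{jk,l_1} = \beta_{jk,l_2}$, while the two extreme values give $\sigma(2m/d) - \tfrac12 = \beta_{jk,l_1} + \beta_{jk,l_2}$, and these are mutually consistent through the odd symmetry $\sigma(-x) - \tfrac12 = -(\sigma(x) - \tfrac12)$. Solving yields the common value $\beta_{jk,l_1} = \beta_{jk,l_2} = \tfrac14\bigl(2\sigma(2m/d) - 1\bigr)$, again producing a valid $\beta$.

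I do not expect any computational difficulty; the real point—and the reason the bound $d \le 4$ appears—is the conceptual sharpness observation. For $s \ge 3$ the quantity $\sigma(\frac{m}{d}\sum_l q_{il}) - \tfrac12$ depends only on $\sum_l q_{il}$, and comparing sign patterns with equal sums but different individual entries forces any matching affine form to be the symmetric one $\beta\sum_l q_{il}$; but then fitting at sums $\pm1$ and $\pm3$ would require $\sigma(3m/d) - \tfrac12$ and $\sigma(m/d) - \tfrac12$ to stand in the ratio $3:1$, which fails for every $m > 0$ by the strict concavity of $\sigma$ on the positive axis. It is precisely the coincidence that, for $s \le 2$, the odd symmetry of $\sigma$ about $\tfrac12$ together with the very small number of attainable sum-values makes the linear fit exact that drives the corollary.
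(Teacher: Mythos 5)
Your proof is correct and follows essentially the same route as the paper: case analysis on the attainable values of $\sum_{l\notin\{j,k\}} q_{il}$ for $d=2,3,4$, using the odd symmetry $\sigma(-x) = 1-\sigma(x)$ to exhibit explicit $\beta$'s (your value $\tfrac14\bigl(2\sigma(2m/d)-1\bigr)$ for $d=4$ matches the paper's $\tfrac{\sigma(m/2)-1/2}{2}$). The paper only writes out $d=4$ and asserts the other cases; your treatment of $d=2,3$ and your sharpness remark for $s\ge 3$ are consistent extras, not deviations.
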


\begin{proof}
For $d = 4$, without loss of generality, let $j = 1$ and $k = 2$. 
\eq{
\sigma \left( \frac{m}{4} (q_{i3} + q_{i4}) \right) = 
  \begin{cases}
   \sigma(m / 2)  & \text{if } q_{i3} = q_{i4} = 1 \\
   \frac{1}{2}  & \text{if } q_{i3} \neq q_{i4}\\
   \sigma(- m / 2)   & \text{if } q_{i3} = q_{i4} = -1
  \end{cases}.
}
Since $\sigma(- m / 2) = 1 - \sigma(m / 2)$, 
\eq{
\sigma \left( \frac{m}{4} (q_{i3} + q_{i4}) \right) = \frac{1}{2} + q_{i3} \beta_{12,3} + q_{i4} \beta_{12,4}
}
where
\eq{
\beta_{12,3} = \beta_{12,4} = \frac{\sigma(m / 2) - 1/2}{2}.
}
The same can be shown for $d = 2$ or $d = 3$. 
\end{proof}

%

\section{Computation}
\label{sec:computation}

Optimizing for the principal component loadings of logistic PCA is difficult because of the non-convex objective function and the orthonormality constraint. We derive two algorithms for generating solutions. The first is guaranteed to find a local solution and the second finds a global solution to a relaxed version of logistic PCA. 

\subsection{Majorization-minimization (MM) algorithm}
\label{sec:3c:MM}
One approach to minimizing the deviance is to iteratively minimize simpler objectives.
Majorization-minimization \citep[][]{hunter2004MM} seeks to solve difficult optimization problems by majorizing the objective function with a simpler objective, and minimizing the majorizing function. The majorization function must be equal to or greater than the original objective for all inputs, and equal to it at the current input value. 



The deviance of a single estimated natural parameter $\theta$ is quadratically approximated at $\theta^{(t)}$ by
\eq{
-2 \log p(x; \theta) & = -2 x \theta + 2 \log(1 + \exp(\theta)) \\
& \approx -2 x \theta^{(t)} + 2 \log(1 + \exp(\theta^{(t)})) + 2 (\hat{p}^{(t)} - x) (\theta - \theta^{(t)}) + \hat{p}^{(t)} (1 - \hat{p}^{(t)})(\theta - \theta^{(t)})^2 \\
& \leq -2 x \theta^{(t)} + 2 \log(1 + \exp(\theta^{(t)})) + 2 (\hat{p}^{(t)} - x) (\theta - \theta^{(t)}) + \frac{1}{4}(\theta - \theta^{(t)})^2,
}
where $\hat{p}^{(t)} = \sigma(\theta^{(t)})$. The inequality is due to the variance of a Bernoulli random variable being bounded above by $1/4$. \cite{Deleeuw2006} showed that the deviance itself is majorized by this same function.


Therefore, the deviance for the whole matrix is majorized by
\eq{
& \sum_{i,j} \left\{ -2 x_{ij} \theta_{ij}^{(t)} + 2 \log(1 + \exp(\theta_{ij}^{(t)})) + 2 (\hat{p}^{(t)}_{ij} - x_{ij}) (\theta_{ij} - \theta_{ij}^{(t)}) + \frac{1}{4}(\theta_{ij} - \theta_{ij}^{(t)})^2 \right\} \\
= & \frac{1}{4} \sum_{i,j} (\theta_{ij} - z_{ij}^{(t)})^2 + C
}
where $C$ is a constant that does not depend on $\theta_{ij}$ and 
\begin{equation}
 z^{(t)}_{ij} = \theta^{(t)}_{ij} + 4 [x_{ij} - \sigma(\theta_{ij}^{(t)}) ] \label{eq:working}
\end{equation}
are the working variables in the $t$th iteration. Further, let $\bZ^{(t)}$ be a matrix whose $ij$th element equals $z^{(t)}_{ij}$. The working variables have a similar form to the so-called adjusted response used in the iteratively reweighted least squares algorithm for generalized linear models \citep{mccullagh1989glm}. Instead of having weights equal to the estimated variance at the current estimates, we use the upper bound, which allows for minimization of the majorization function.

The logistic PCA objective function can be majorized around estimates of $\bU^{(t)}$ and $\bmu^{(t)}$ as
\begin{align*}
D(\bX; \bOne_n \bmu^T + (\Satnat - \bOne_n \bmu^T) \bU \bU^T)  & \leq \frac{1}{4} \sum_{i,j} \left( \mu_j + [\bU\bU^T (\bsatnat_i - \bmu)]_{j} -  z^{(t)}_{ij} \right)^2 + C,
\end{align*}
and hence, the next iterates of $\bU^{(t+1)}$ and $\bmu^{(t+1)}$ can be obtained by minimizing
\begin{equation}
\sum_{i,j} \left( \mu_j + [\bU\bU^T (\bsatnat_i - \bmu)]_{j} -  z^{(t)}_{ij} \right)^2  =  \|\bOne_n \bmu^T + (\Satnat - \bOne_n \bmu^T) \bU \bU^T - \bZ^{(t)} \|_F^2. \label{eq:maj_iter}
\end{equation}

Given initial estimates for $\bU$ and $\bmu$, a solution can be found by iteratively minimizing equation \eqref{eq:maj_iter}, subject to orthonormality constraint $\bU^T \bU=\bI_k$. With fixed $\bmu$, 
the minimizer of the majorizing function can be found by expanding equation (\ref{eq:maj_iter}). Letting $\Satnat_c := \Satnat - \bOne_n \bmu^T$ and $\bZ^{(t)}_c := \bZ^{(t)} - \bOne_n \bmu^T$, then
\eq{
\argmin_{\bU^T\bU=\bI_k} \|\Satnat_c \bU \bU^T - \bZ_c^{(t)} \|_F^2 & = \argmin tr \left( \bU\bU^T \Satnat_c^T \Satnat_c \bU\bU^T \right) - tr \left( \bU\bU^T \Satnat_c^T  \bZ^{(t)}_c \right) - tr \left( (\bZ^{(t)}_c)^T \Satnat_c \bU \bU^T \right) \\ 
& = \argmin tr \left[ \bU^T \left( \Satnat_c^T \Satnat_c - \Satnat_c^T  \bZ^{(t)}_c - (\bZ^{(t)}_c)^T \Satnat_c \right) \bU \right] \\
& = \argmax tr \left[ \bU^T \left( \Satnat_c^T  \bZ^{(t)}_c + (\bZ^{(t)}_c)^T \Satnat_c - \Satnat_c^T \Satnat_c \right) \bU \right].
}
%
The trace is maximized when $\bU$ consists of the first $k$ eigenvectors of the symmetric matrix $\Satnat_c^T  \bZ^{(t)}_c + (\bZ^{(t)}_c)^T \Satnat_c - \Satnat_c^T \Satnat_c$ \citep{fan1949eig}.

With fixed $\bU$, the minimization of the majorizing function with respect to $\bmu$ is a least squares problem, and the majorizing function is minimized by 
\eq{
\bmu = \frac{1}{n} (\bZ^{(t)} - \Satnat \bU \bU^T)^T \bOne_n,
}
which can be interpreted as the average differences between the projection of the uncentered saturated natural parameters and the current working variables. Algorithm \ref{alg:lpca_mm} presents the MM algorithm for logistic PCA.

\begin{algorithm}[t]
 \label{alg:lpca_mm}
 \SetKwInOut{Input}{input}\SetKwInOut{Output}{output}
 \SetAlgoLined
 \Input{Binary data matrix ($\bX$), $m$, number of principal components ($k$), convergence criteria ($\epsilon$)}
 \Output{$d \times k$ orthonormal matrix of principal component loadings ($\bU$) and column main effects ($\bmu$)}
 \BlankLine
 Set $t=0$ and initialize $\bmu^{(0)}$ and $\bU^{(0)}$. We recommend setting $\mu^{(0)}_j = \logit \bar{X}_j$ and setting $\bU^{(0)}$ to the first $k$ right singular vectors of $\bQ$ \\
 \Repeat{Deviance converges}{
 	\begin{enumerate}
 	\item $t \leftarrow t+1$
 	\item Set the working variables \\$z_{ij}^{(t)} = \theta_{ij} + 4[x_{ij} - \sigma(\theta_{ij}) ]$ where $\theta_{ij} = \mu^{(t-1)}_j + [\bU^{(t-1)} (\bU^{(t-1)})^T (\bsatnat_i - \bmu^{(t-1)})]_j$
 	\item $\bmu^{(t)} = \frac{1}{n} (\bZ^{(t)} - \Satnat \bU^{(t-1)} (\bU^{(t-1)})^T)^T \bOne_n$
 	\item Carry out the eigen decomposition of $(\Satnat - \bOne_n (\bmu^{(t)})^T)^T  (\bZ^{(t)} - \bOne_n (\bmu^{(t)})^T) + (\bZ^{(t)} - \bOne_n (\bmu^{(t)})^T)^T (\Satnat - \bOne_n (\bmu^{(t)})^T) - (\Satnat - \bOne_n (\bmu^{(t)})^T)^T (\Satnat - \bOne_n (\bmu^{(t)})^T) = E \Lambda E^T$ and \\
 	set $\bU^{(t)}$ to the first $k$ eigenvectors in $E$
 	\end{enumerate}
 }
 \caption{Majorization-minimization algorithm for logistic PCA}
\end{algorithm}


Since the optimization problem is non-convex, finding the global minimum is difficult in general. Due to the fact that the majorization function is above the deviance and tangent to it at the previous iteration, minimizing the majorization function at each iteration must either decrease the deviance or cause no change at each iteration. While the majorization-minimization does not guarantee a global minimum generally, the quadratic majorization function and the smooth objective does guarantee finding a local minimum \citep[Chapter 12]{lange2013optimization}.

\subsection{Convex relaxation}
\label{sec:3c:Fan}
Another approach to solving for the loadings of logistic PCA is to relax the non-convex domain of the problem to a convex set. Instead of solving over rank-$k$ projection matrices $\bU\bU^T$, we can optimize over the convex hull of rank-$k$ projection matrices, called the Fantope \citep{dattorro2005convex}, which is defined as 
\begin{equation*}
\mathcal{F}^k = \text{conv}\{ \bU\bU^T ~|~ \bU \in \mathbb{R}^{d \times k},~\bU^T\bU=\bI_k \} = \{ \bH~|~\bZero \preceq \bH \preceq \bI_d,~tr(\bH)=k \}.
\end{equation*}
With the Fantope, $k$ no longer needs to be an integer, but can be any positive number.

The logistic PCA problem then becomes
\begin{flalign*}
\begin{aligned}
 \mbox{minimize} &~~ D(\bX; \bOne_n \bmu^T + (\Satnat - \bOne_n \bmu^T) \bH) & \\
 \mbox{subject to} &~~~ \bmu \in \mathbb{R}^d \text{ and } \bH \in \mathcal{F}^k.
\end{aligned}
\end{flalign*}
For fixed $\bmu$, this is a convex problem in $\bH$ since the objective and the constraints are both convex. Similarly, for fixed $\bH$, this is convex in $\bmu$. Hence, the problem is bi-convex.

With fixed $\bH$, minimizing with respect to $\bmu$ is similar to a GLM problem and many strategies can be used. We choose to solve for $\bmu$ first and leave it fixed while $\bH$ is solved for. With $\bH = \bZero$, the deviance is minimized with respect to $\bmu$ when $\mu_j = \logit \bar{X}_j$. We do not update $\bmu$ again in our implementation, but other strategies, such as an alternating gradient descent, could easily be implemented if desired.

Once $\bmu$ is fixed, to solve the relaxed problem with respect to $\bH$, we use accelerated gradient descent \citep{nesterov2007gradient}, which can be used to solve {\it composite} convex problems of the form
\begin{equation*}
f(\bH) + h(\bH)
\end{equation*}
where $f(\bH)$ is convex and differentiable and $h(\bH)$ is convex. Further, the gradient of $f(\bH)$ with respect to $\bH$ must be Lipschitz continuous, i.e. there exists $L$ such that for all $\bH_1$ and $\bH_2$
\eq{
\| \nabla f(\bH_1) - \nabla f(\bH_2) \|_F \leq L \| \bH_1 - \bH_2 \|_F,
}
where $\nabla f(\bH_1)$ is the gradient of $f(\bH)$ with respect to $\bH$, evaluated at $\bH_1$. For our problem, $f(\bH) = D(\bX; \bOne_n \bmu^T + (\Satnat - \bOne_n \bmu^T) \bH)$ and $h(\bH) = 1_{\{\bH \in \mathcal{F}^k\}}$, the convex indicator function which is 0 if $\bH$ is in the Fantope set and $\infty$ otherwise.

At the $t$th iteration, the accelerated gradient descent algorithm takes the steps
\eq{
\textbf{F}^{(t)} & = \bH^{(t-1)} + \frac{t - 2}{t + 1} (\bH^{(t-1)} - \bH^{(t-2)}) \\
\bH^{(t)} & = \Pi_{\mathcal{F}^k}(\textbf{F}^{(t)} - \frac{1}{L} ~ \nabla f(\textbf{F}^{(t)})),
}
where $\Pi_{\mathcal{F}^k}(\cdot)$ is the Euclidean projection operator onto the convex set $\mathcal{F}^k$. 

This algorithm is very similar to projected gradient descent \citep{boyd2003subgradient}, but there is an extra step to adjust the previous iteration $ \bH^{(t-1)}$ to $\textbf{F}^{(t)}$. This adjustment allows accelerated gradient descent to be a first-order method that achieves $\mathcal{O}(1/t^2)$ convergence rate \citep{beck2009fista}. That is,
\eq{
\left(f(\bH^{(t)}) + h(\bH^{(t)})\right) - \left(f(\bH^{*}) + h(\bH^{*})\right) \leq \frac{2L ||\bH^{(0)} - \bH^*||_F^2}{(t + 1)^2},
}
where $\bH^{*}$ is the value that minimizes the objective.

To use this method, we must derive the gradient of the deviance with respect to $\bH$, show that it is Lipschitz continuous, and derive the projection operator $\Pi_{\mathcal{F}^k}(\cdot)$.

Without taking into account the symmetry of $\bH$, the gradient of the deviance with respect to $\bH$ is
\begin{equation}
\frac{\partial D}{\partial \bH} = 2 \left(\hat{\bP}-\bX\right)^T \left(\Satnat - \bOne_n \bmu^T \right), \label{eq:fan_grad}
\end{equation}
which is derived in a similar way to (\ref{eq:lpca_deriv}). Since $\bH$ is symmetric, the gradient needs to be symmetrized as follows (see \cite{peterson2012matrix}):
\begin{equation}
\nabla f(\bH) = \left[\frac{\partial D}{\partial \bH}\right] + \left[\frac{\partial D}{\partial \bH}\right]^T - \mbox{diag}\left[\frac{\partial D}{\partial \bH}\right]. \label{eq:fan_grad_sym}
\end{equation}

\begin{theorem} \label{thm:lipschitz}
The gradient of the Bernoulli deviance $D(\bX; \bOne_n \bmu^T + (\Satnat - \bOne_n \bmu^T) \bH)$ with respect to $\bH$ is Lipschitz continuous with Lipschitz constant 
\eq{
L = \|\Satnat - \bOne_n \bmu^T \|_F^2.
}
\end{theorem}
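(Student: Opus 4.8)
The plan is to establish Lipschitz continuity directly from the closed form of the gradient, reducing everything to the elementary fact that the sigmoid $\sigma$ has derivative bounded by $1/4$ --- the same Bernoulli-variance bound that drove the majorization in Section~\ref{sec:3c:MM}. Write $\Satnat_c := \Satnat - \bOne_n \bmu^T$ and let $G(\bH) := \partial D / \partial \bH = 2(\hat{\bP} - \bX)^T \Satnat_c$ denote the unsymmetrized gradient of \eqref{eq:fan_grad}, where $\hat{\bP} = \hat{\bP}(\bH)$ has entries $\sigma([\bOne_n \bmu^T + \Satnat_c \bH]_{ij})$. Since $\bX$, $\bmu$, and $\Satnat_c$ do not depend on $\bH$, for any two points $\bH_1, \bH_2$ the difference collapses to
\[
G(\bH_1) - G(\bH_2) = 2 \bigl(\hat{\bP}(\bH_1) - \hat{\bP}(\bH_2)\bigr)^T \Satnat_c,
\]
so the entire task is to control how fast the fitted probabilities move as $\bH$ moves.

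First I would bound the probability difference entrywise. Because $|\sigma'| \le 1/4$ and $\theta_{ij}(\bH) = [\bOne_n \bmu^T + \Satnat_c \bH]_{ij}$ is affine in $\bH$, each entry satisfies $|\hat{p}_{ij}(\bH_1) - \hat{p}_{ij}(\bH_2)| \le \tfrac14 |[\Satnat_c(\bH_1 - \bH_2)]_{ij}|$ by the mean value theorem; squaring and summing over $i,j$ gives $\|\hat{\bP}(\bH_1) - \hat{\bP}(\bH_2)\|_F \le \tfrac14 \|\Satnat_c(\bH_1 - \bH_2)\|_F$. Two applications of the Frobenius submultiplicative inequality $\|AB\|_F \le \|A\|_F \|B\|_F$ then yield
\[
\|G(\bH_1) - G(\bH_2)\|_F \le 2\,\|\hat{\bP}(\bH_1) - \hat{\bP}(\bH_2)\|_F\,\|\Satnat_c\|_F \le \tfrac12 \|\Satnat_c\|_F^2\,\|\bH_1 - \bH_2\|_F.
\]

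It remains to pass from the raw gradient $G$ to the symmetrized gradient $\nabla f$ of \eqref{eq:fan_grad_sym}. Since the symmetrizing map $G \mapsto G + G^T - \mathrm{diag}(G)$ is linear, leaves the diagonal entries unchanged, and replaces each off-diagonal pair $(G_{ij}, G_{ji})$ by the common value $G_{ij} + G_{ji}$, a direct entrywise computation using $(a+b)^2 \le 2a^2 + 2b^2$ shows it inflates the Frobenius norm by at most a factor of $2$, i.e. $\|\nabla f(\bH_1) - \nabla f(\bH_2)\|_F \le 2\,\|G(\bH_1) - G(\bH_2)\|_F$. Combining this with the previous display cancels the $1/2$ exactly and delivers the stated constant $L = \|\Satnat - \bOne_n \bmu^T\|_F^2$.

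I do not anticipate a genuine conceptual obstacle: the only analytic input is the $1/4$ bound on the logistic variance, and everything else is norm bookkeeping. The one place to be careful is the constant accounting --- in particular, tracking the factor of $2$ from symmetrization so that the final constant is a valid upper bound and not accidentally too small. It is precisely this clean cancellation against the $1/2$ that motivates keeping the crude Frobenius submultiplicative bound throughout rather than sharpening any step to an operator-norm version, which would only produce a smaller (still valid, but less tidy) constant.
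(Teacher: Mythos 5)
Your proposal is correct and follows essentially the same route as the paper's proof: the $1/4$ bound on the sigmoid increment (the paper's Lemma~\ref{lem:shrink}), two applications of Frobenius submultiplicativity to get the constant $\|\Satnat_c\|_F^2/2$ for the unsymmetrized gradient, and a factor-of-$2$ bound for the symmetrization step. The only cosmetic difference is that you bound the linear symmetrizing map entrywise via $(a+b)^2 \le 2a^2+2b^2$, whereas the paper first drops the diagonal correction and then applies Cauchy--Schwarz to $\|A+A^T\|_F^2$; both yield the same factor of $2$ and the same final constant.
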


The proof is given in Appendix \ref{sec:lipschitz_proof}.

To update $\bH$, the projection operator $\Pi_{\mathcal{F}^k}$ has to be defined explicitly. \cite{vu2013fantope} showed that the Euclidean projection of a positive definite matrix $\textbf{M}$ with a spectral decomposition $\textbf{M} = \sum_{j=1}^d \lambda_j \bu_j \bu_j^T$ onto $\mathcal{F}^k$ is given by
\begin{equation}
\Pi_{\mathcal{F}^k} (\textbf{M}) = \sum_{j=1}^d \lambda_j^+(\nu) \bu_j \bu_j^T, \label{eq:fan_proj}
\end{equation}
where $\lambda_j^+(\nu) = \min(\max( \lambda_j-\nu,0),1)$ with $\nu$ that satisfies $\sum_j \lambda_j^+(\nu) = k$. A quick and easy way to solve for $\nu$ is by bisection search, with an initial lower bound of $\lambda_d-k/d$ and an initial upper bound of $\lambda_1$, the largest eigenvalue. Algorithm \ref{alg:lpca_fan} presents the complete algorithm for the Fantope solution.

\begin{algorithm}[t]
 \label{alg:lpca_fan}
 \SetKwInOut{Input}{input}\SetKwInOut{Output}{output}
 \SetAlgoLined
 \Input{Binary data matrix ($\bX$), $m$, rank ($k$)}
 \Output{$d \times d$ rank-$k$ Fantope matrix ($\bH$)}
 \BlankLine
 Set $t=0$, $\mu_j = \logit \bar{X}_j$, $j = 1, \ldots, d$, and $L = \|\Satnat - \bOne_n \bmu^T \|_F^2$ \\
 Initialize $\bH^{(-1)} = \bH^{(0)} = \bU\bU^T$, where $\bU$ consists of the first $k$ right singular vectors of $\Satnat - \bOne_n \bmu^T$ \\
 \Repeat{Deviance converges}{
 	\begin{enumerate}
 	\item $t \leftarrow t+1$
 	\item Set $\textbf{F}^{(t)} = \bH^{(t-1)} + \frac{t - 2}{t + 1} (\bH^{(t-1)} - \bH^{(t-2)})$ 
 	\item Calculate the gradient, $\nabla(\textbf{F}^{(t)})$, using equations \eqref{eq:fan_grad} and \eqref{eq:fan_grad_sym}
 	\item Move in the negative direction of the gradient and project onto the \\
 	rank-$k$ Fantope using equation \eqref{eq:fan_proj},
 	$\bH^{(t)} = \Pi_{\mathcal{F}^k} \left(\textbf{F}^{(t)} - \frac{1}{L} \nabla(\textbf{F}^{(t)}) \right)$
 	\end{enumerate}
 }
 \caption{Fantope algorithm for convex relaxation to logistic PCA}
\end{algorithm}

\subsubsection{Discussion of the Convex Formulation}

Even though this algorithm is guaranteed to find a global minimum of the deviance over the Fantope space, it is of practical interest how to translate the estimated $\hat{\bH}$ into an estimated loadings matrix $\hat{\bU}$ because we may be more interested in principal component loadings and the scores they induce. For that reason, once the algorithm has converged, we may convert the Fantope matrix $\hat{\bH}$ into a projection matrix by setting $\hat{\bU}$ equal to the matrix with the first $k$ eigenvectors of $\hat{\bH}$. We will discuss the impact of this conversion in Section \ref{sec:4sim:FantopeMM}.

Both the Fantope solution and the projection solution have the previously-stated advantage that applying the principal components analysis to new data only involves matrix multiplications. There are some practical differences however. If the goal of the analysis is to have a small number of principal components which can be interpreted or used for subsequent analyses, the Fantope solution may not be preferred. Even for $k=1$, it is possible for the solution to be of full rank, no matter $d$. 

However, the Fantope algorithm is useful if viewed as a constrained method of approximating a binary matrix, when interpretation is not desired. This type of approximation can be useful when accurate estimates of the fitted probabilities are required, for example, in collaborative filtering. Further, since the problem is convex, it can also be solved more quickly and reliably than the formulation with a projection matrix. In fact, warm starts may be employed to solve for the optimal matrices over a fine grid of $k$ and $m$. 

Another property of the convex relaxation is given below.

\begin{theorem}
For any fixed $k$, the convex relaxation of logistic PCA can fit any dataset arbitrarily well with large enough $m$.
\end{theorem}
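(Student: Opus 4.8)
The plan is to prove the statement constructively by exhibiting a single feasible point of the relaxed problem whose deviance tends to $0$ as $m \to \infty$; since the global minimum over the Fantope can only be smaller, this forces the optimized fit to be arbitrarily good. First I would reinterpret ``fit arbitrarily well.'' Using $q_{ij} = 2x_{ij}-1 \in \{-1,1\}$, the Bernoulli log-likelihood at natural parameters $\bTheta$ can be written as $\log p(\bX; \bTheta) = \sum_{i,j} \log \sigma(q_{ij}\theta_{ij})$, so the deviance $D(\bX;\bTheta) = -2\big(\log p(\bX;\bTheta) - \log p(\bX;\Satnat)\big)$ is driven to $0$ precisely when every product $q_{ij}\theta_{ij} \to +\infty$, i.e. when $\hat{p}_{ij} \to x_{ij}$ for all $i,j$.

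Next I would produce the witness. Take $\bmu = \bZero$ and $\bH = \tfrac{k}{d}\bI_d$. This $\bH$ is feasible for every $0 < k \le d$: it is symmetric with all eigenvalues equal to $k/d \in [0,1]$, so $\bZero \preceq \bH \preceq \bI_d$, and $tr(\bH) = k$, hence $\bH \in \mathcal{F}^k$. With this choice the natural parameter matrix collapses to $\bTheta = \Satnat\,\bH = \tfrac{mk}{d}\bQ$, so that $\theta_{ij} = \tfrac{mk}{d}q_{ij}$ and $q_{ij}\theta_{ij} = \tfrac{mk}{d}$ for every entry (using $q_{ij}^2 = 1$).

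I would then evaluate the deviance at this point and let $m$ grow. Because $\log p(\bX; \Satnat) = \log p(\bX; m\bQ) = nd\,\log\sigma(m)$ and $\log p(\bX; \bTheta) = nd\,\log\sigma(mk/d)$, the deviance at the witness is $D = -2nd\big(\log\sigma(mk/d) - \log\sigma(m)\big)$, which is nonnegative (since $k \le d$) and tends to $0$ as $m \to \infty$ because $\sigma(mk/d) \to 1$. Equivalently, $\hat{p}_{ij} = \sigma(\tfrac{mk}{d}q_{ij}) \to x_{ij}$ uniformly over $i,j$. Since the convex relaxation minimizes the deviance over all of $\mathcal{F}^k$ and $\bmu$, its optimal value is at most $D$, and is therefore also made arbitrarily small; this establishes the claim.

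There is no genuine analytic obstacle here---the content of the proof is entirely in spotting the right feasible matrix. The one point deserving care is feasibility for arbitrary (possibly non-integer) $k$, which the scaled identity handles cleanly, and the conceptual crux worth emphasizing is that the witness $\tfrac{k}{d}\bI_d$ has full rank $d$. This is exactly the slack the Fantope introduces: the rank-$k$ projection formulation underlying Theorems~\ref{thm:ind}--\ref{thm:compound} cannot in general realize such a uniform rescaling of $\Satnat$ (unless $k=d$), so the analogous ``fit anything'' statement is not available for the non-relaxed problem. The full-rank nature of the witness is consistent with the earlier observation that even for $k=1$ the Fantope solution may be of full rank, and it explains why the convex relaxation, though computationally attractive, can overfit.
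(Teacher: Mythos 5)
Your proof is correct and is essentially identical to the paper's: both exhibit the same feasible witness $\bmu=\bZero$, $\bH=\tfrac{k}{d}\bI_d$, note that $\hat{\bTheta}=\tfrac{mk}{d}\bQ$, and let $m\to\infty$. Your version merely makes the limiting step more explicit by writing the deviance as $-2nd\bigl(\log\sigma(mk/d)-\log\sigma(m)\bigr)$, which is a welcome tightening of the paper's informal claim that $\hat{\bTheta}$ ``tends towards'' $\Satnat$.
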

\begin{proof}
Let $\bmu = \bZero$ and $\bH = (k/d)~\bI_d$, which satisfies $\bH \in \mathcal{F}^k$. The fitted value of the natural parameters is
\eq{
\hat{\bTheta} = \Satnat \bH = \frac{m k}{d} \bQ.
}
With fixed $k$ and $d$, $\hat{\bTheta}$ will tend towards the saturated model parameters, $\Satnat$, as $m \rightarrow \infty$, which gives the perfect fit to the data.
\end{proof}

In this sense, $m$ can be seen as a tuning parameter that controls the extent to which the estimates fit the data. However, unlike traditional regularization parameters, the above ``over-fitted'' model will fit a new case, $\bx^*$, just as well as the training data. One way to combat this is to set elements of the matrix to missing and choose the $m$ which completes the matrix most accurately, a topic of future research.

This behavior of $m$ is similar to that of $k$ in standard PCA, logistic PCA, and its convex counterpart, where increasing $k$ towards $d$ necessitates a better fit of the data, and likely a better fit on new observations as well.

In the non-convex formulation, we have empirically found that the deviance does not always decrease with increasing $m$. In fact, the deviance will decrease to zero only if $\text{sign}(\hat{\theta}_{ij}) = q_{ij}$ for all $i$ and $j$, which is unlikely if $k$ is much smaller than $d$. 

An R \citep{R2015} implementation of both algorithms for logistic PCA, as well as an algorithm for logistic SVD, can be found at \url{cran.r-project.org/web/packages/logisticPCA/}.

\section{Numerical Examples}
\label{sec:examples}
In this section, we demonstrate how the logistic PCA formulation works on both simulated and real data. In particular, we numerically examine the differences between the previous formulation and the proposed formulation, the effect of $m$, and the effectiveness of the algorithms.

\subsection{Simulation}
\label{sec:4sim}
For comparison of our formulation of logistic PCA (LPCA) with the previous formulation (LSVD), we simulate binary matrices from a family of multivariate Bernoulli mixtures that induce a low-rank structure in both the true probability matrix and the logit matrix. The components or clusters of the Bernoulli mixtures are determined by cluster-specific probability vectors.
 		
\subsubsection{Simulation setup}
\label{sec:4sim:Setup}
Let $k$ indicate the number of mixture components or clusters of the probability vectors for $d$-variate binary random vectors making up an $n\times d$ data matrix $\bX$. For a specified $n$, $d$, and $k$, let $C_i$ be the cluster assignment for the $i$th observation, $i=1,\ldots,n$, which takes one of $k$ values \{$1,\ldots, k$\} with equal probability. 
Further, we specify $k$ true probability vectors, $\boldsymbol{p}_c \in [0,1]^d$,  for $c=1,\ldots,k$. In our simulations, the probabilities are independently generated from a $\text{Beta}(\alpha,\beta)$ distribution and $X_{ij}$ given $C_i=c$ are independently generated from Bernoulli$(p_{cj})$. For a Beta distribution, instead of the shape parameters $\alpha$ and $\beta$, we vary  the mean $\bar{p}=\frac{\alpha}{\alpha+\beta}$ and
concentration parameter $\phi=\frac{\alpha+\beta}{2}$, which is inversely related to  the variance of $\bar{p}(1-\bar{p})/(2\phi+1)$.  If we let $\bA$ be an $n \times k$ indicator matrix with $A_{ic}=1_{\{ C_i=c \}}$ and $\bB_*$ be a $d \times k$ matrix with the $c$th column equal to $\boldsymbol{p}_c$, then the true probability matrix is 
\begin{equation*}
\bP=[p_{ij}]=\bA \bB_*^T,
\end{equation*}
or equivalently, the logit matrix is $\bTheta=\logit~ \bP = \bA \bB^T$ with $\bB=\logit~\bB_*$. The accuracy of the approximation of $\bP$ through logistic PCA depends on the number of principal components considered and the value of parameter $m$.
To reduce confusion between the true number of clusters $k$ and the number of principal components considered, we will adopt the notation $\hat{k}$ for the  number of principal components considered. For all simulations in this section, we set $n=100$ and $d=50$. For simulations with $\bar{p} = 0.5$, we did not include the main effects $\bmu$ for either LPCA or LSVD. We did this to minimize the differences in implementations and the main effects are likely close to $\bZero$ when $\bar{p} = 0.5$.

\subsubsection{Fantope versus MM}
\label{sec:4sim:FantopeMM}
To compare the performance of the MM and Fantope algorithms, a dataset was simulated using a two-component ($k=2$) mixture of probabilities from a beta distribution with $\bar{p}=0.5$ and $\phi=3$. We then estimated the logistic principal component loadings with $\hat{k}=2$ and $m=4$ fifteen times with both the Fantope algorithm and the MM algorithm, each with a random initialization for $\bU^{(0)}$ or $\bH^{(0)}=\bU^{(0)}(\bU^{(0)})^T$. 
For the Fantope, the starting step size was determined by averaging the inverse of the element-wise second derivatives evaluated at the initial value of $\bH^{(0)}$.
For each iteration, we kept track of the average deviance of the estimates, which is defined as $D(\bX; \hat{\bTheta})/(nd)$. Figure \ref{fig:FantopeMM} displays the average deviance of a sequence of the solutions from the algorithms, each line representing a different initialization. We also kept track of the deviance of the projection matrix estimate from the Fantope estimate for each iteration, which we labeled ``Fantope-Proj'' in the plot. For this estimate, we use the first $\hat{k}$ eigenvectors of $\bH$ to estimate $\bU$.
\begin{figure}[t]
	\centering
		\includegraphics{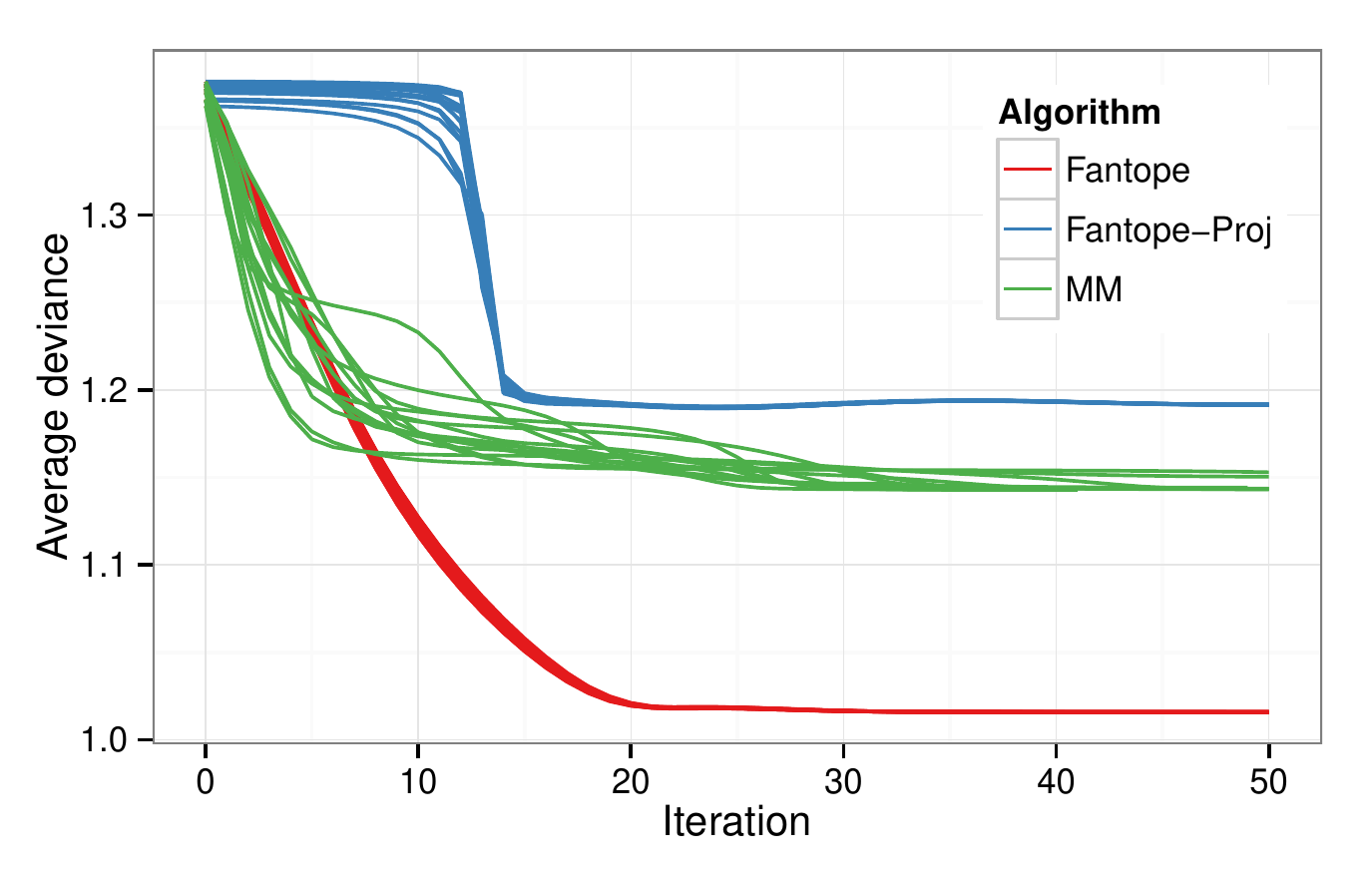}
	\caption{Average deviance by iteration number for solutions from Fantope and MM algorithms 
       with 15 different initial values on a simulated dataset}
	\label{fig:FantopeMM}
\end{figure}

There are several notable insights in this figure. The deviance for the Fantope solution fairly quickly converges to the same minimum value, regardless of initialization. As expected, the Fantope solution provides a deviance lower bound, while the deviance for the projection matrix resulting from the Fantope solution does not necessarily decrease in the same manner. In fact, the MM algorithm finds solutions better than the ``projected'' Fantope solution for every initialization. The MM algorithm is more sensitive to initialization, but the range of the deviance values of solutions is fairly narrow in this example. 

This numerical comparison suggests that when the projection matrix $\bU\bU^T$ is concerned, the MM algorithm performs better than the Fantope algorithm followed by projection. For this reason, we use the MM algorithm only in the subsequent analyses. However, it could potentially be useful to run both the MM and Fantope algorithms. The Fantope solution gives a lower bound of the deviance for the global solution with a projection matrix while the MM solution and Fantope-Proj solution provide upper bounds. The gap between the lower and upper bounds could give some indication of our confidence in the projection solution. If there is a small gap, we can be fairly sure that the projection solution is close to the global minimum. 

\subsubsection{Estimate of true probabilities}
\label{sec:4sim:Probs}
\begin{figure}[t]
	\centering
		\includegraphics[scale=0.75]{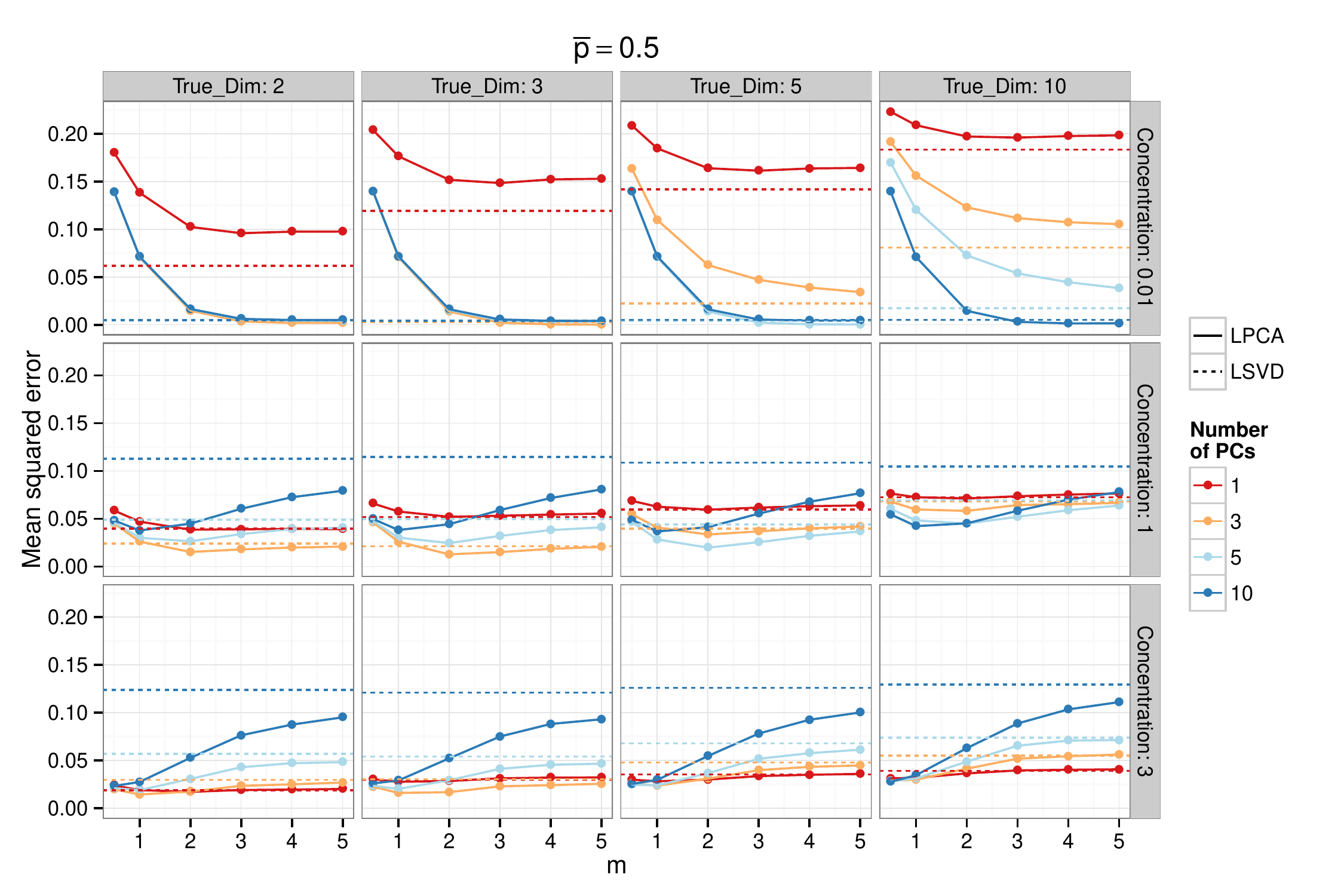}
	\caption{Mean squared error of probability estimates derived from logistic PCA with varying $m$ and logistic SVD in a simulation experiment where the rank of the true probability matrix ranges from 2 to 10 and the extent of concentration of the true probabilities varies from low to high}
	\label{fig:PSqerror05}
\end{figure}
We compare our formulation of logistic PCA (LPCA) with previous formulations (LSVD) and study the effect of the additional $m$ parameter on the probability estimates. For the numerical study, we simulated 12 binary matrices from a variety of different cluster models with $\bar{p}=0.5$. We varied the true number of clusters ($k \in \{2,3,5,10\}$) and the concentration parameter for the probabilities ($\phi \in \{0.01,1,3\}$). For each of the 12 data matrices, we performed dimensionality reduction with LSVD and LPCA. For both, we varied the estimated number of components ($\hat{k} \in \{1,3,5,10\}$) and for LPCA we also varied $m$ from 0.5 to 5. Again, for LPCA, we used the MM algorithm and for LSVD we used the iterative SVD algorithm from \cite{Deleeuw2006}.

After estimating the probability matrix as $\hat{\bP}$, we compared it to the true probability matrix $\bP$ by taking the element-wise mean squared error $\|\hat{\bP}-\bP\|_F^2/(nd)$. Figure \ref{fig:PSqerror05} shows the results.
The first thing to notice is the effect of the concentration parameter. When $\phi$ is low (the top row of Figure \ref{fig:PSqerror05}), most of the true probabilities are close to 0 or 1 with not much in between. In this situation, having the correct estimated dimension $\hat{k}$ is crucial for both methods and, in fact, having $\hat{k}>k$ does no harm. Further, for LPCA, higher $m$ is better for estimating the probability matrix in this situation. Both of these results are in line with our expectations. When the rank of the estimate is allowed to be higher, the estimates are able to fit the data more closely in general. In doing so, the resulting estimated probabilities will be close to 0 and 1. As stated in Section \ref{sec:3d:m}, higher values of $m$ enable estimates that are closer to 0 and 1 as well. 

On the other hand, if the concentration is high, most of the true probabilities will be close to $\bar{p}=0.5$. The bottom row of Figure \ref{fig:PSqerror05} shows the results when this is the case. The opposite conclusions are reached from this scenario. In general, having a lower rank or a lower $m$ is better. This is again as expected.

When the concentration is moderate ($\phi=1$, the middle row of Figure \ref{fig:PSqerror05}), the true probabilities are generated from a uniform distribution. For LPCA, having the correct estimated dimension is important, but so is $m$. For each of the different true dimensions, the lowest mean squared error is achieved when the estimated dimension matches the true dimension. This is not true for all values of $m$, as when $k=10$, the estimate with $\hat{k}=10$ is poor for large $m$. For each of the estimated ranks, there is a local minimum of MSE for choosing $m$. In contrast to the high and low concentration cases, the results for LSVD do not mirror those of LPCA for the same $\hat{k}$. Here, an estimated rank of $\hat{k}=3$ is best for all of the true ranks. 


Finally, for any of the given dataset simulations, there is a combination of $m$ and $\hat{k}$ for which LPCA had a mean squared error as small as or smaller than any LSVD estimate. The challenge obviously is to find the best settings and to determine the optimal combination data-adaptively in applications. We show that, for each $\hat{k}$, using five-fold cross validation is an effective way to choose $m$. To accomplish this, we randomly split the rows into five groups. For each of the groups, we perform logistic PCA with a given $m$ on all of the data except the rows in that group. We then predict the natural parameters on the held-out group of rows with the given $m$ and the fitted orthonormal matrix using \eqref{eq:pred_param} and record the predictive deviance. After looping through all five groups, the $m$ with the lowest predictive deviance is used for logistic PCA with all the rows. 

The resulting mean squared errors using the data-adaptively chosen $m$'s are reported in Figure \ref{fig:PSqerror05M}. Using this strategy, the low-rank estimates from logistic PCA give more accurate estimates for the high and middle concentration scenarios, while logistic SVD has more accurate estimates for the low concentration scenario, unless $\hat{k} \geq k$. Standard techniques, such as those mentioned in Section \ref{sec:3d:PCs}, can be used to choose $k$ for both logistic PCA and logistic SVD.

We have also run another simulation experiment for a more sparse situation with $\bar{p}=0.1$ with all the other factors kept the same. The plot of the resulting estimated errors is in Figure \ref{fig:PSqerror01} in Appendix \ref{app:sim01}. Since the column means are far away from 0.5, we have also included main effects $\bmu$ in the model.
The results are largely the same as described in this section. The only major difference is that estimates of a lower rank than the true rank perform better overall, probably due to the fact that predicting all zeros gives a decent prediction in this case.

\begin{figure}[t]
	\centering
		\includegraphics[scale=0.75]{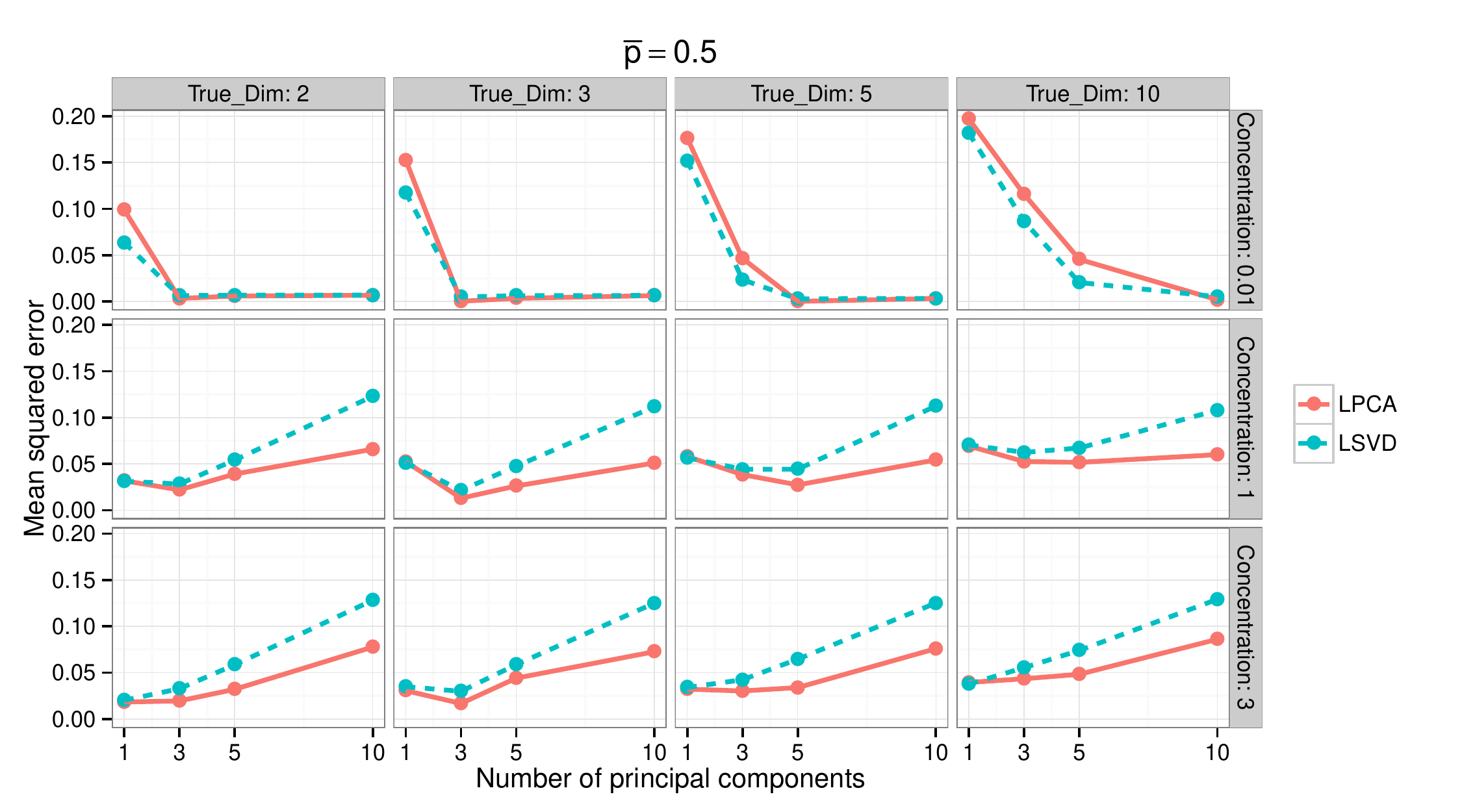}
	\caption{Mean squared error of probability estimates derived from logistic PCA and logistic SVD in the simulation experiment where $m$ in logistic PCA is chosen by cross validation}
	\label{fig:PSqerror05M}
\end{figure}

\subsection{Data Analysis}
\label{sec:4data}
We present an application of logistic PCA to patient-diagnosis data, which are part of the electronic health records data on 12,000 adult patients admitted to the intensive care units at Ohio State University's Medical Center from 2007 to 2010. Patients can be admitted to an intensive care unit (ICU) for a wide variety of reasons, some of them frequently co-occurring. While in the ICU, patients are diagnosed with one or more medical conditions of over 800 disease categories from the International Classification of Diseases (ICD-9). It is often of interest to practitioners to study the comorbidity in patients, that is, what medical conditions patients are diagnosed with simultaneously.
The latent factor view of principal components seems to be appropriate for describing the concept of comorbidity through a lower dimensional approximation of the disease probabilities or their transformations. Such a representation could reveal a common underlying structure capturing simultaneous existence of multiple medical conditions.

We analyzed a random sample of 1,000 patients. There were 584 ICD-9 codes that had at least one of the randomly-selected patients assigned to them. These patient-diagnosis data were organized in a binary matrix, $\bX$, where $x_{ij}$ is 1 if patient $i$ has disease $j$ and 0 otherwise. The proportions of disease occurrences 
ranged from 0.001 to 0.488 (the maximum corresponding to acute respiratory failure) with a median of 0.005 and the third quartile equals 0.017, meaning that most of the disease categories were rare. 
For comparison, we also applied logistic SVD and standard PCA to the data.

\subsubsection{Selecting number of principal components}
\label{sec:4data:PCs}
\begin{figure}[t]
	\centering
		\includegraphics{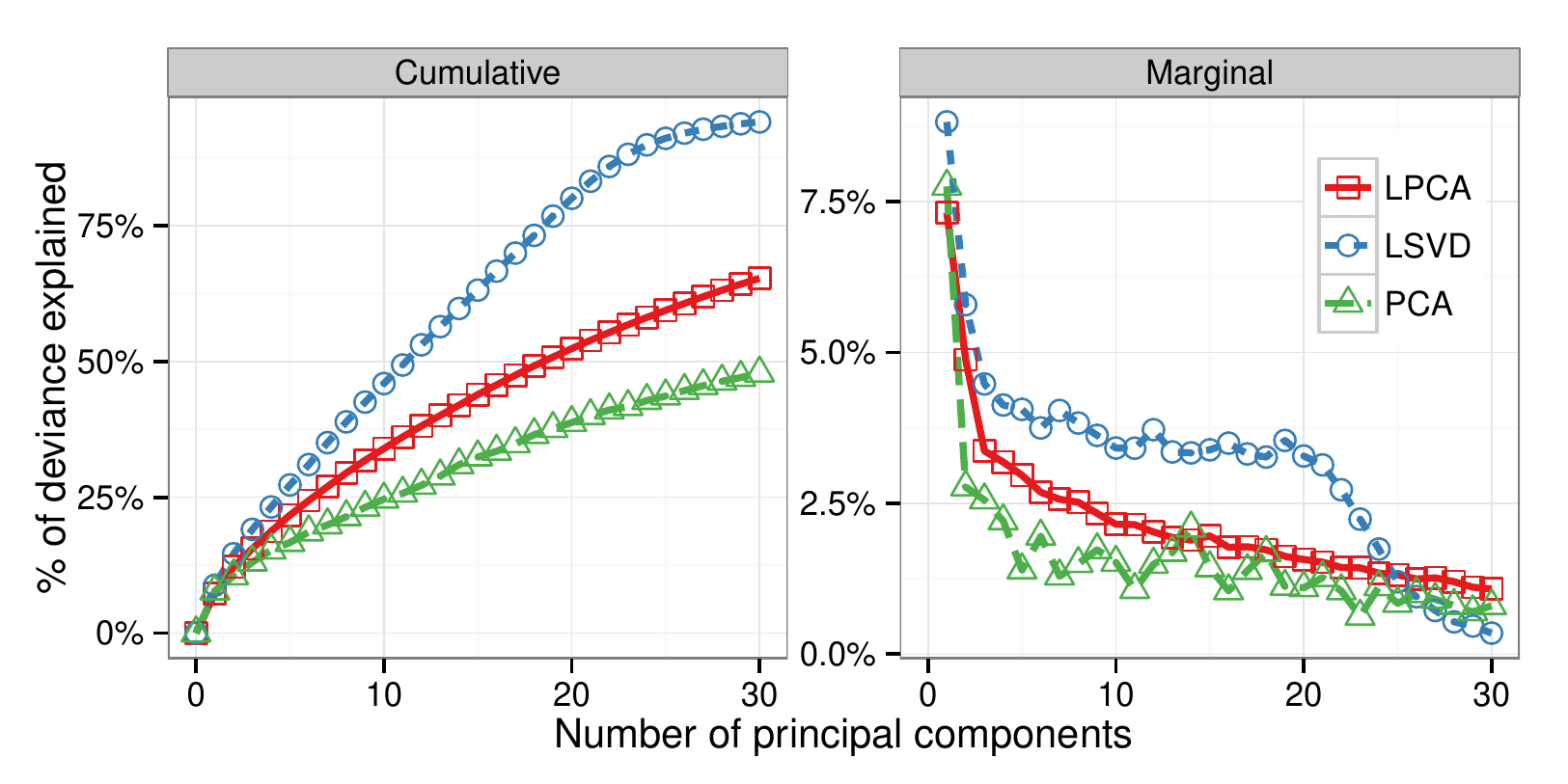}
	\caption{Cumulative and marginal percent of deviance explained by principal components of LPCA, LSVD, and PCA for the patient-diagnosis data.}
	\label{fig:MedDataNumber}
\end{figure}

As described in Section \ref{sec:3d:PCs}, to decide on the number of principal components to include in the model or approximation of the data matrix, we calculated the cumulative percent of deviance and the marginal percent of deviance explained by both LSVD and LPCA, as well as standard PCA. Figure \ref{fig:MedDataNumber}  illustrates  the change in the percent of deviance explained as the number of components increases. For logistic PCA, $m$ was chosen by five-fold cross validation. In order to have the same $m$ for all $k$ considered, we performed cross validation with $k = 15$, for which $m = 8$ had the lowest cross validation deviance. For standard PCA, we calculated the Bernoulli deviance using the reconstructed values as probability estimates. Since the reconstructed entries in the matrix could be outside the range of 0 to 1, we truncated them to be within the range $[10^{-10},1-10^{-10}]$. 

If we were using LSVD, one reasonable choice for the number of components would be 24 because that is where the cumulative percent of deviance explained begins to level off and the cumulative percent of deviance explained is quite high, at 90\%.
The marginal percent of deviance plot suggests that the marginal contributions level off after the second component for both LPCA and LSVD. 
To have a manageable number of components to analyze, we may use two components.

\subsubsection{Quality of fit}
\label{sec:4data:quality}
\begin{figure}[t]
	\centering
		\includegraphics{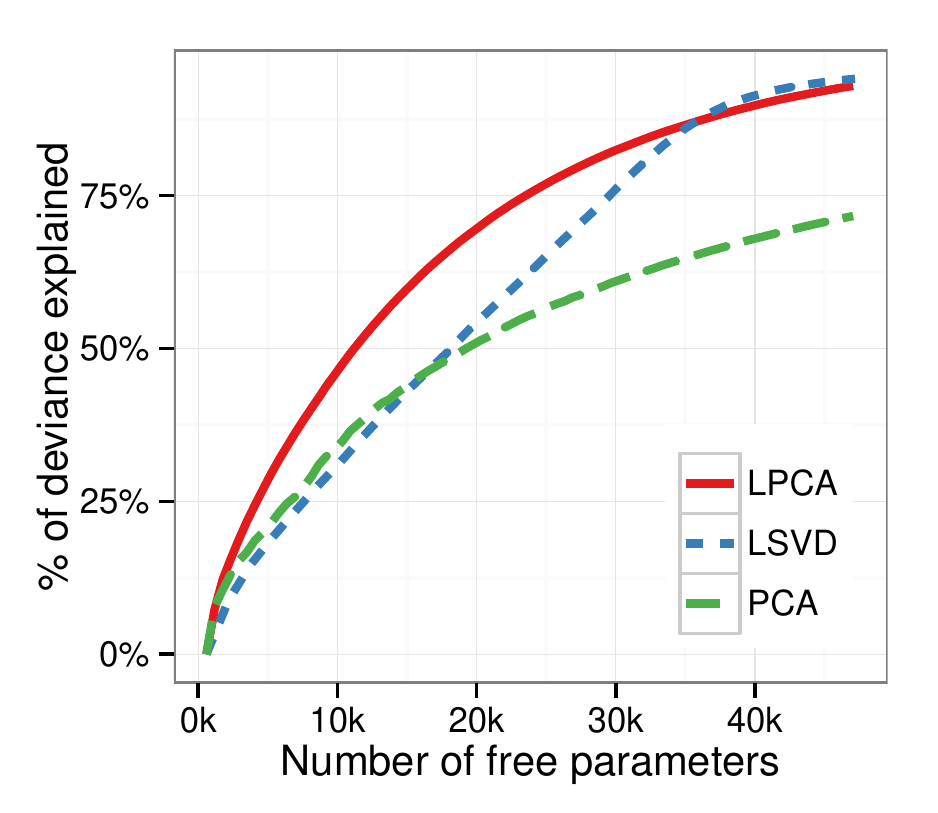}
	\caption{Cumulative percent of deviance explained by principal components of LPCA, LSVD, and PCA plotted against the number of free parameters for the patient-diagnosis data.}
	\label{fig:MedDataParams}
\end{figure}

For this dataset, it is clear that LPCA fits the data significantly better than PCA, even though they have the same number of parameters per component added. Also, LSVD has higher percent of deviance explained than LPCA after the first few components. 
However, it is not a completely fair comparison between LSVD and LPCA, because LSVD has extra parameters in $\bA$ to better fit the data. To illustrate this, we also plotted the percent of deviance explained as a function of the number of free parameters in Figure \ref{fig:MedDataParams}. When indexed by the number of free parameters, LPCA looks quite favorable compared to LSVD. In fact, there isn't a large difference between standard PCA and LSVD for smaller numbers of parameters.

To further show the advantages of LPCA, we used the loadings learned from this data to construct a low-rank estimate of the logit matrix using \eqref{eq:pred_param} for a different set of 1000 randomly selected patients. The percent of {\it predictive} deviance is plotted as a function of rank in Figure \ref{fig:MedDataPred}. Also plotted on this figure is the percent of predictive deviance by standard PCA. We did not perform LSVD on this data since we were trying to recreate a scenario in which one wouldn't have the ability to solve for principal component scores on the new data. Just as in the explained deviance, the loadings learned from LPCA are able to reconstruct the new data much better than those learned from PCA.

\begin{figure}[t]
	\centering
		\includegraphics{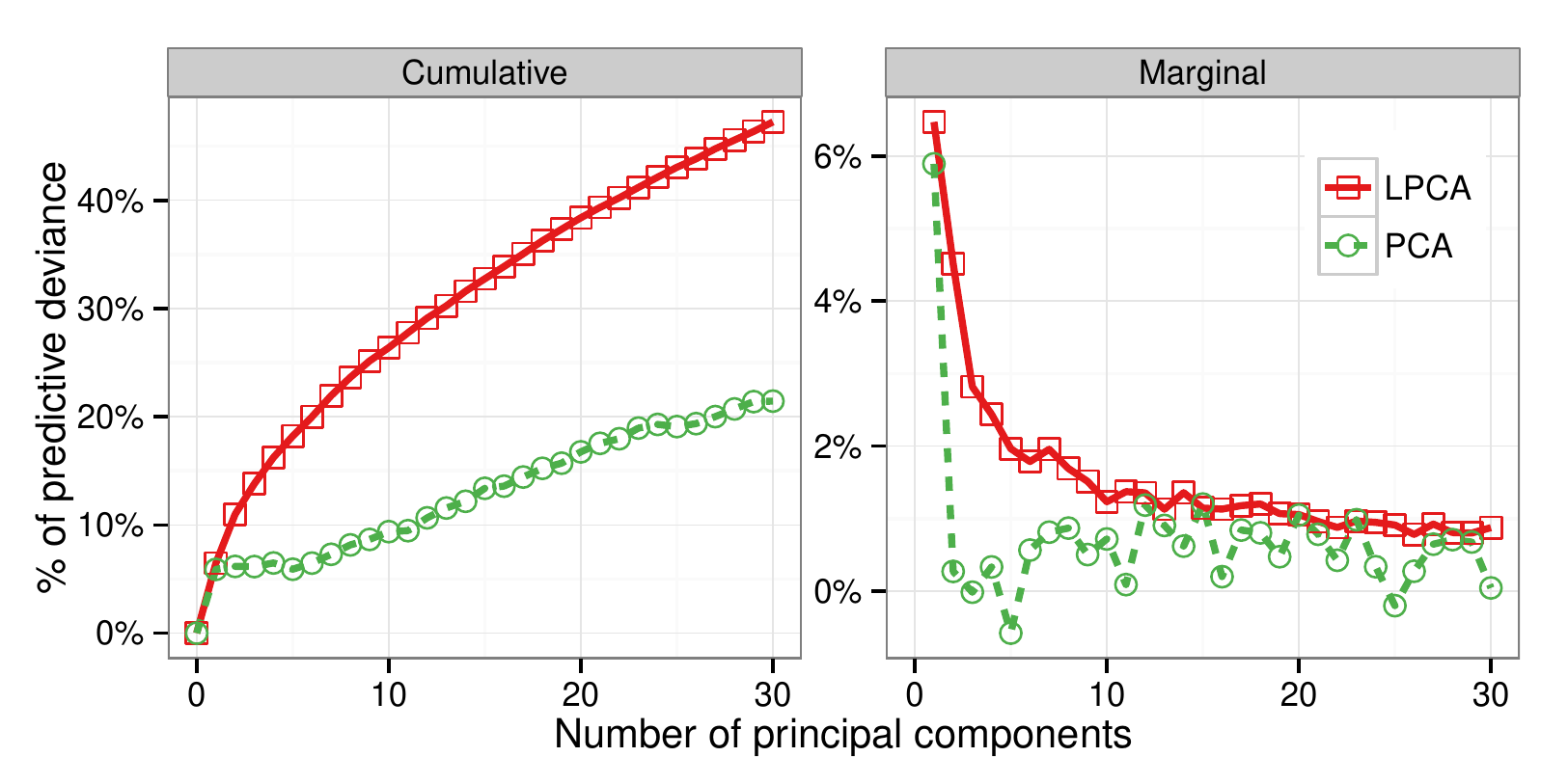}
	\caption{Cumulative and marginal percent of {\it predictive} deviance by the principal components of LPCA and PCA for the patient-diagnosis data.}
	\label{fig:MedDataPred}
\end{figure}

\subsubsection{Evidence of LSVD overfitting through principal component regression}
\label{sec:overfit}
\begin{figure}[t]
	\centering
		\includegraphics[scale=0.75]{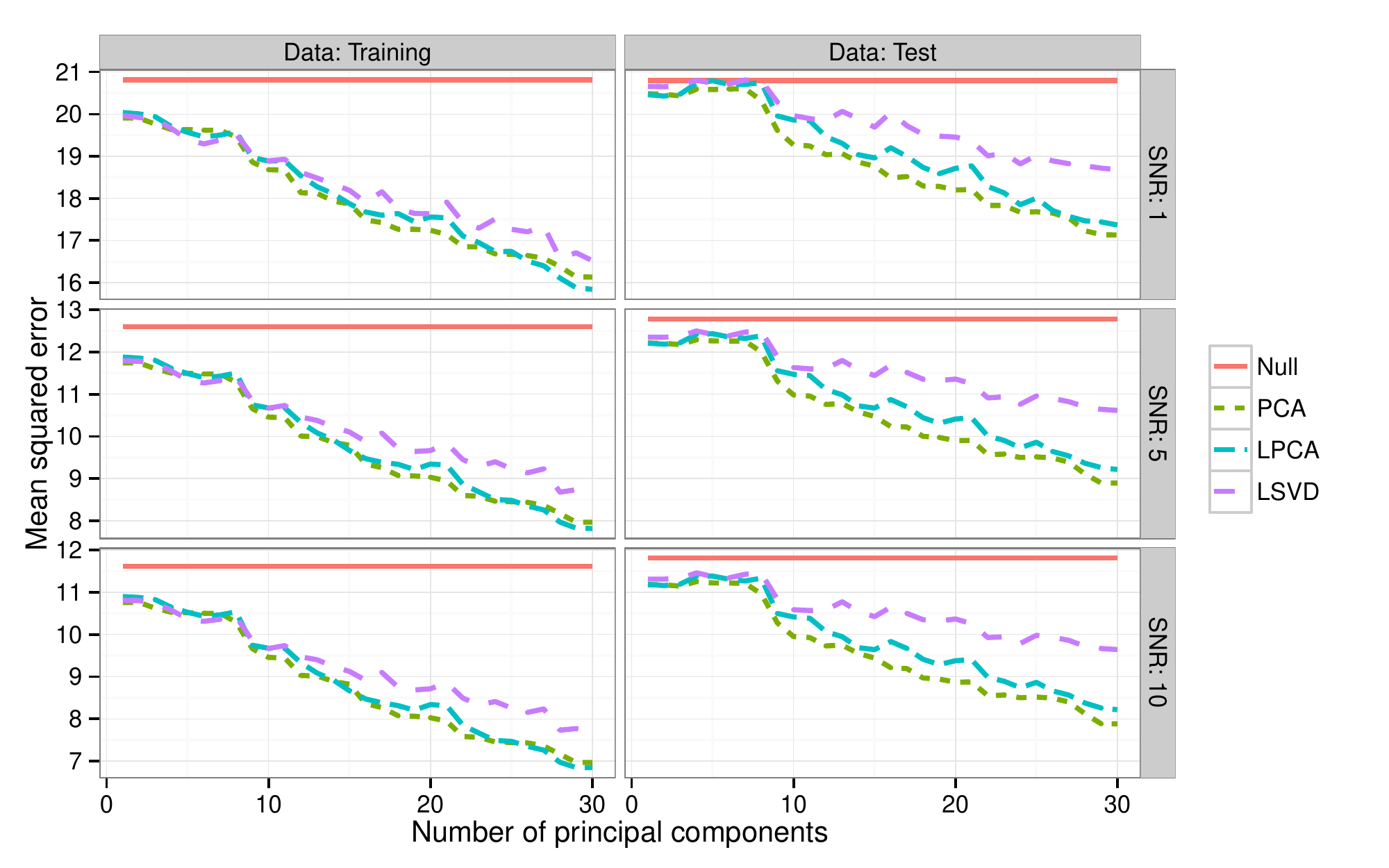}
	\caption{The in-sample and out-of-sample MSE of models for principal component regression based on the patient-diagnosis data}
	\label{fig:MedDataPCR}
\end{figure}

To explore how LSVD might overfit, we created a simulation experiment using the same ICU data. We randomly generated a $d$-dimensional coefficient parameter, $\boldsymbol{\beta}$, from a standard normal distribution. For each patient, we simulated a response variable, $y_i$, from a normal distribution with mean equal to $\bx_i^T \boldsymbol{\beta}$ and the variance, $\sigma^2$. Three different variances were chosen in order for the signal-to-noise (SNR) ratio, $\text{var}(\bx_i^T \boldsymbol{\beta}) / \sigma^2$, to be either 1, 5, or 10. These represent weak, moderate, and strong signals, respectively.
%

For each $k$, we used the $k$ principal components learned from PCA, LPCA, or LSVD as covariates in a linear regression. On a different set of 1000 randomly selected patients, we first derived the principal component scores using the appropriate procedure for each method and then predicted the response for the new patients. As stated in Section \ref{sec:3b:NewData}, obtaining principal component scores on new data with LSVD requires fitting a logistic regression for each new observation, while LPCA and PCA only require matrix multiplications.

The left column in Figure \ref{fig:MedDataPCR} shows the in-sample mean squared error (MSE) and the right column shows the out-of-sample MSE. The three rows of Figure \ref{fig:MedDataPCR} correspond to the three signal-to-noise ratios. For reference, we have also shown the intercept only model (labeled Null). For all three dimensionality reduction methods, the in-sample MSE decreases as the number of components increase, as we would expect. In general, PCA and LPCA track fairly close together. LSVD, on the other hand does markedly worse on both the training and test data, for all three signal levels. Further, the difference between the MSE from LPCA and LSVD is larger on the test data than on the training data. This simulation was run several times with similar results.

The point of this simulation is to show that, even though LSVD may explain more deviance than the other methods due to the extra parameters, that may also cause it to learn loadings that do not generalize well to other settings. This is very important for dimensionality reduction techniques, since they are not usually done in isolation, but rather are part of a larger data analysis. There are no guarantees that LPCA will generalize more effectively than LSVD in every setting, however we have shown in a controlled setting that LSVD produced loadings that do not generalize as well.

\subsubsection{Interpretation of loadings}
\label{sec:4data:Loadings}

\begin{figure}[t]
	\centering
		\includegraphics{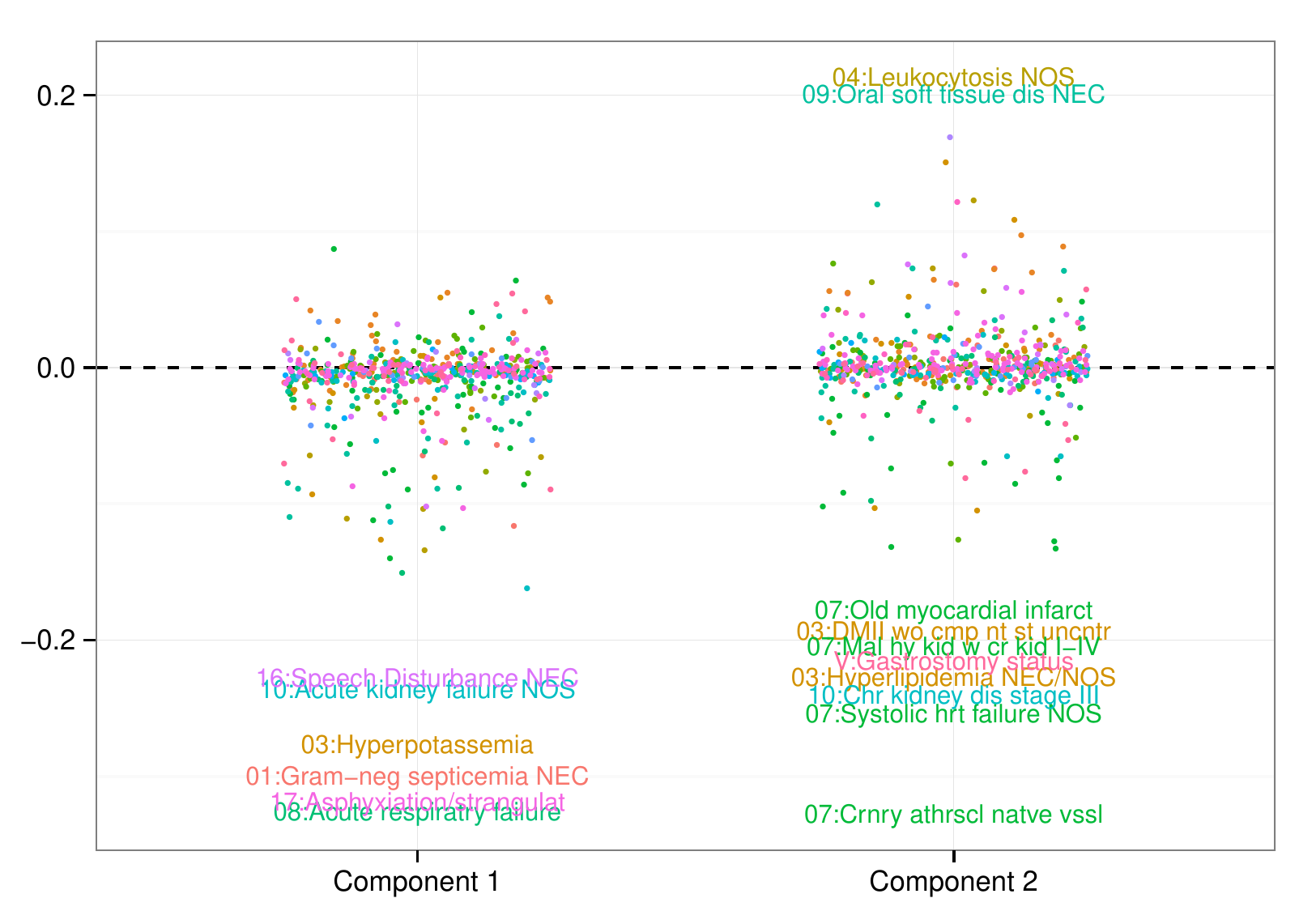}
	\caption{The loadings of two components from logistic PCA of the patient-diagnosis data}
	\label{fig:MedDataInterpret}
\end{figure}

We look at the principal component loadings of LPCA with two components, as chosen by the scree plot, to attempt to interpret the comorbidity in patients. While there is no guarantee that the principal components will be interpretable, insight can occasionally be gained from the analysis. As mentioned in Section \ref{sec:3d:PCs}, the two components do not necessarily have an order as in standard PCA, since they are solved for jointly. Also, it is not guaranteed that either of the two components analyzed here are represented in the analysis with one or three components. 

The resulting loadings are plotted in Figure \ref{fig:MedDataInterpret}. 
We have highlighted the disease categories for the loadings with the highest absolute values. The points are also color-coded according to the broader category that the disease belongs to, which is also related to the number before the label. 

A version of this plot was shown to subject-matter experts in the department of biomedical informatics at Ohio State University and they were able to identify meaning in the components \citep{hyun2013meeting}. The first component has high loadings for acute kidney failure and acute respiratory failure, among others, which are common serious conditions that cause a patient to be admitted into the ICU. 
Three of the large loadings from the second component are diseases of the circulatory system (green): systolic heart failure, myocardial infarction, aneurysm of coronary vessels, and hypertensive renal disease. It also includes two endocrine, nutritional and metabolic diseases, and immunity disorders (brown). The subject-matter experts stated that the diseases with high loadings were ones that they have observed co-occurring relatively often. Based on these findings, it seems that the principal components have a meaningful interpretation related to the diseases patients have at the ICU. 

\section{Discussion}
\label{sec:discussion}

Previous formulations of logistic PCA have extended the singular value decomposition to binary data. Our formulation more consistently extends PCA to binary data by finding projections of the natural parameters of the saturated model. Our method produces principal components which are linear combinations of the data and can be quickly calculated on new data. We have given two algorithms for minimizing the deviance and shown how they perform on both simulated and real data. 

Further, the formulation proposed in this paper can be extended to other members of the exponential family. Using the appropriate deviance and natural parameters from the saturated model, the formulation can naturally be applied to many types of data. 

When $d \gg n$, standard PCA can be inconsistent and it has been shown that adding sparsity constraints to PCA can induce consistency \citep{johnstone2009spca}. Sparse loadings have the additional benefit of easier interpretation. \cite{lee2010sparse} extended LSVD by adding an $L_1$ penalty to the loadings matrix $\bB$. Our formulation can be extended in the same way, but further research is needed to find the best way to solve for the loadings. However, it would be straightforward to incorporate the sparsity constraint with the Fantope solution, similar to how it was done in \cite{vu2013fantope} with standard PCA.

\section*{Acknowledgements}
We would like to thank Vince Vu for his feedback, especially on the convex formulation and algorithm.
We also thank Sookyung Hyun and Cheryl Newton at College of Nursing for providing the medical diagnoses data 
and valuable comments on a preliminary data analysis.
This research was supported in part by National Science Foundation grants DMS-12-09194 and DMS-15-13566.

\appendix
\section{Appendix}
\subsection{Calculation of gradient for logistic PCA}
\label{app:deriv_lpca}
The gradient of the deviance in \eqref{eq:lpca} with respect to $\bU$ can be seen from the steps below. 
\eq{
\frac{1}{2} \frac{\partial D}{\partial \bU} = & - \frac{\partial}{\partial \bU} tr\left( \bX^T \left( \bOne_n \bmu^T + (\Satnat - \bOne_n \bmu^T) \bU \bU^T \right) \right) \\ 
& + \frac{\partial}{\partial \bU} \sum_{i,j} \log \left( 1 + \exp(\mu_j + [\bU\bU^T (\bsatnat_i - \bmu)]_{j}) \right)
}

By standard matrix derivative rules (see, for example, \cite{peterson2012matrix}), 
\begin{equation*}
\frac{\partial}{\partial \bU} tr(\bX^T (\Satnat - \bOne_n \bmu^T) \bU\bU^T) = \left(\bX^T (\Satnat - \bOne_n \bmu^T) + (\Satnat - \bOne_n \bmu^T)^T \bX \right) \bU.
\end{equation*}

To take the derivative of the second piece, letting $\hat{\theta}_{ij} = \mu_j  + [\bU\bU^T (\bsatnat_i - \bmu)]_{j}$, note that 
\begin{eqnarray*}
\frac{\partial}{\partial u_{kl}} \sum_{i,j} \log \left( 1 + \exp(\hat{\theta}_{ij}) \right) 
= \sum_{i,j} \frac{\exp(\hat{\theta}_{ij})}{1 + \exp(\hat{\theta}_{ij})} \frac{\partial \hat{\theta}_{ij}}{\partial u_{kl}} 
= \sum_{i,j} \hat{p}_{ij} \frac{\partial \hat{\theta}_{ij}}{\partial u_{kl}}.
\end{eqnarray*}

Since
\begin{eqnarray*}
\frac{\partial [\bU\bU^T (\bsatnat_i - \bmu)]_{j}}{\partial u_{kl}} &=& 
  \begin{cases}
   (\satnat_{ik} - \mu_k) u_{jl}  & \text{if } k \neq j \\
   (\satnat_{ik} - \mu_k) u_{jl} + (\bsatnat_i - \bmu)^T U_{l}  & \text{if } k=j,
  \end{cases}
\end{eqnarray*}
that makes
\begin{eqnarray*}
\frac{\partial}{\partial u_{kl}} \sum_{i,j} \log \left( 1 + \exp(\hat{\theta}_{ij}) \right) &=& \sum_{i,j} \hat{p}_{ij} (\satnat_{ik} - \mu_k) u_{jl} + \sum_{i} \hat{p}_{ik} (\bsatnat_i - \bmu)^T U_{l} \\
&=& (\Satnatcol_{k} - \bOne_n \mu_k)^T \hat{\bP} U_{l} + \hat{P}_{k}^T (\Satnat - \bOne_n \bmu^T) U_{l}.
\end{eqnarray*}

In matrix notation,
\begin{equation*}
\frac{\partial}{\partial \bU} \sum_{i,j} \log \left( 1 + \exp(\mu_j  + [\bU\bU^T (\bsatnat_i - \bmu)]_{j}) \right) = \left(\hat{\bP}^T (\Satnat - \bOne_n \bmu^T) + (\Satnat - \bOne_n \bmu^T)^T \hat{\bP} \right) \bU,
\end{equation*}
and the result in Equation (\ref{eq:lpca_deriv}) follows.

The gradient of the deviance with respect to $\bmu$ in \eqref{eq:lpca_deriv_mu} is derived as follows.
\eq{
\frac{1}{2} \frac{\partial D}{\partial \bmu} = & - \frac{\partial}{\partial \bmu} tr\left( \bX^T \bOne_n \bmu^T \left( \bI_d - \bU \bU^T \right) \right) \\ 
& + \frac{\partial}{\partial \bmu} \sum_{i,j} \log \left( 1 + \exp(\mu_j  + \left[\bU \bU^T (\bsatnat_i - \bmu) \right]_{j}) \right)
}

Using standard vector differentiation,
\eq{
\frac{\partial}{\partial \bmu} tr\left( \bX^T \bOne_n \bmu^T \left( \bI_d - \bU \bU^T \right) \right) = (\bI_d - \bU\bU^T) \bX^T \bOne_n
}
and
\eq{
\frac{\partial}{\partial \bmu} \sum_{i,j} \log \left( 1 + \exp(\mu_j  + \left[\bU \bU^T (\bsatnat_i - \bmu) \right]_{j}) \right) = \sum_{i,j} \hat{p}_{ij} \left( \be_j - \bu_{j \cdot}^T \bU^T \right) = (\bI_d - \bU\bU^T) \hat{\bP}^T \bOne_n,
}
where $\be_j$ is a length $d$ standard basis vector with $1$ in the $j$th position.

\subsection{Proof of Theorems}

\subsubsection{Theorem \ref{thm:ind}} \label{sec:ind_proof}
\begin{proof}
\item[(i)] When $\bu = \be_l$, the solutions for the main effects $\bmu$ are known analytically. For $j \neq l$, $\hat{\mu}_j = \logit \bar{X}_j$. $\hat{\mu}_l$ is undefined because $\mu_l$ has no effect on the deviance. We will let $\hat{\mu}_l = \logit \bar{X}_l$ for simplicity, although any constant would work. In this case, the estimated natural parameters are
\eq{
\hat{\theta}_{ij} = \hat{\mu}_j + \delta_{jl} (\bsatnat_i - \hat{\bmu})^T \be_l =
  \begin{cases}
   \hat{\mu}_j & \text{if } j \neq l \\
   \satnat_{il}  & \text{if } j = l
  \end{cases},
}
where $\delta_{jl}$ is the Kronecker delta. 

Since $\be_l$ is a standard basis vector, 
\eq{
[\bC^m \be_l]_j = c^m_{jl},
}
where 
\begin{equation} \label{eq:stat_mat}
c^m_{jl} = (X_j - \hat{P}_j)^T (\Satnatcol_l - \bOne_n \hat{\mu}_l) + (X_l - \hat{P}_j)^T (\Satnatcol_j - \bOne_n \hat{\mu}_j).
\end{equation} 

We will show that $c^m_{jl}$ is equal to 0 when $j \neq l$ and $\bar{X}_l = \frac{1}{2}$. Looking at the first part of the summation in \eqref{eq:stat_mat},
\eq{
(X_j - \hat{P}_j)^T (\Satnatcol_l - \bOne_n \hat{\mu}_l) & = (X_j - \bOne_n \sigma(\hat{\mu}_j))^T (m Q_l - \bOne_n \hat{\mu}_l) \\
& = (X_j - \bOne_n \bar{X}_j)^T (m(2 X_l - \bOne_n) - \bOne_n \hat{\mu}_l) \\
& = m \left[ 2 X_j^T X_l - n \bar{X}_j  - n \bar{X}_j \hat{\mu}_l / m - 2 n \bar{X}_j \bar{X}_l + n \bar{X}_j + n \bar{X}_j \hat{\mu}_l / m  \right] \\
& = m \left[ 2 X_j^T X_l - 2 n \bar{X}_j \bar{X}_l \right] \\
& = m \left[ 2 n \bar{X}_j \bar{X}_l - 2 n \bar{X}_j \bar{X}_l  \right] = 0.
}
The last line is due to the assumption that $X_j$ and $X_l$ are uncorrelated.

From the fact that $(x_{il} - \sigma(m q_{il}))(m q_{ij} - \hat{\mu}_j) = (m q_{ij} q_{il} - q_{il} \hat{\mu}_j)/(1 + e^m)$, the second part of $c^m_{jl}$ in \eqref{eq:stat_mat} is
\eq{
(X_l - \hat{P}_l)^T (\Satnatcol_j - \bOne_n \hat{\mu}_j) & = (X_l - \sigma(m Q_l))^T (m Q_j- \bOne_n \hat{\mu}_j) \\
& = \sum_{i = 1}^n (x_{il} - \sigma(m q_{il})) (m q_{ij}- \hat{\mu}_j) \\
& = \sum_{i = 1}^n \frac{m q_{ij} q_{il} - q_{il} \hat{\mu}_j}{1 + e^m} \\
& = n \bar{Q}_{l} \frac{m \bar{Q}_{j} - \hat{\mu}_j}{1 + e^m}.
}
If $\bar{Q}_{l} = 0$, or equivalently $\bar{X}_l = \frac{1}{2}$, then this is exactly zero, for all $m$.

When $j = l$, also using the fact that $q_{il}^2 = 1$,
\eq{
c^m_{ll} & = 2 (X_l - \hat{P}_l)^T (\Satnatcol_l - \bOne_n \hat{\mu}_l) \\
& = 2 \sum_{i = 1}^n \frac{m q_{il}^2 - q_{il} \hat{\mu}_l}{1 + e^m} \\
& = 2 n \frac{m - \hat{\mu}_l \bar{Q}_{l}}{1 + e^m}.
}

When $\bar{X}_l = \frac{1}{2}$, 
\eq{
\bC^m \be_l = \lambda_m \be_l,
}
for all $m$, where $\lambda_m = \frac{2 n m}{1 + e^m}$. With $\bmu = \hat{\bmu}$ and $\bu = \be_l$, the first-order optimality conditions (\ref{eq:lpca_deriv}--\ref{eq:ortho}) are exactly satisfied.

\item[(ii)]
When $\bar{X}_l \neq \frac{1}{2}$, the first part of \eqref{eq:stat_mat} still equals zero, but the other part does not. The squared norm of equation \eqref{eq:stat_mat} equals
\eq{
\| \bC^m \be_l \|^2 = \left( 2n \frac{m - \hat{\mu}_l \bar{Q}_{l}}{1 + e^m} \right)^2 + \sum_{j : j \neq l} \left(n \bar{Q}_{l} \frac{m \bar{Q}_{j} - \hat{\mu}_j}{1 + e^m}\right)^2,
}
which can be made as small as we desire by increasing $m$.
\end{proof}

\subsubsection{Theorem \ref{thm:order}} \label{sec:order_proof}
\begin{proof}
If $\bu = \be_l$ and $\hat{\mu}_j = \logit \bar{X}_j$ for $j \neq l$, then the deviance of the $l$th column does not depend on the column mean $\bar{X}_l$ and is given by
\eq{
- 2 \sum_{i = 1}^n \log \sigma(q_{il} \satnat_{il}) = - 2 n \log \sigma(m).
}
Thus, the deviance depends on the other $d - 1$ columns of the dataset. The deviance of the $j$th column ($j \neq l$) is
\eq{
- 2 n \left( \bar{X}_j \log \bar{X}_j + (1 - \bar{X}_j) \log (1 - \bar{X}_j) \right),
}
which is maximized at $\bar{X}_j = \frac{1}{2}$ and decreases as $\bar{X}_j$ moves away from $\frac{1}{2}$. Therefore, choosing $l$ with the column mean closest to $\frac{1}{2}$ will result in a fit with the lowest deviance.
\end{proof}

\subsubsection{Theorem \ref{thm:compound}} \label{sec:cs_proof}
\begin{proof}
The first-order optimality condition for the loading vector with $k = 1$ and no main effects is
\eq{
\bC^m \bu = \lambda_m \bu,
}
where 
\eq{
\bC^m := (\bX - \hat{\bP})^T \Satnat + \Satnat^T (\bX - \hat{\bP}).
}

We will show that, with $\bu = \frac{1}{\sqrt{d}} \bOne_d$ and the conditions listed in the theorem, $\bC^m$ is compound symmetric, which in turn implies that $\bu$ satisfies the first-order optimality conditions. 

One useful implication of $\bu \propto \bOne_d$ is that $\hat{p}_{ij} = \hat{p}_{ik}$ for all $i$ and $j, k$ because $\hat{\theta}_{ij} = u_j (\bu^T \bsatnat_i) = \frac{m}{d} \sum_{l = 1}^d q_{il}$. We will therefore let $\hat{\mathbf{p}}$ be the column vector with $i$th element equal to $\hat{p}_{ij}$ for all $j$.

First, we show that $c^m_{jj} = c^m_{kk},$ for all $j, k$. $c^m_{jj} = c^m_{kk}$ if and only if $X_j^TQ_j - \hat{P}_j^TQ_j = X_k^TQ_k - \hat{P}_k^TQ_k$. This, in turn, is equivalent to 
\eq{
\hat{\mathbf{p}}^T (Q_j - Q_k) = \frac{1}{2} \bOne_n^T (Q_j - Q_k).
}

Focusing on the left hand side,
\eq{
\hat{\mathbf{p}}^T (Q_j - Q_k) & = \sum_{i = 1}^n \hat{p}_i (q_{ij} - q_{ik}) \\
& = \sum_{i: q_{ij} \neq q_{ik}} \hat{p}_i (q_{ij} - q_{ik}) 
}
The second line is due to the summation only being non-zero when $q_{ij} \neq q_{ik}$. When this is true, $\sum_{l = 1}^d q_{il} = \sum_{l \not \in \{j,k\}} q_{il}$. If $q_{ij} = q_{ik}$, $\hat{p}_i (q_{ij} - q_{ik})$ will equal 0 regardless of $\hat{p}_i$. Therefore, we can state 
\eq{
\sum_{i: q_{ij} \neq q_{ik}} \hat{p}_i (q_{ij} - q_{ik}) = \sum_{i = 1}^n \sigma \left( \frac{m}{d} \sum_{l \not \in \{j,k\}} q_{il} \right) (q_{ij} - q_{ik}),
}
and from \eqref{eq:cs_cond},
\eq{
\hat{\mathbf{p}}^T (Q_j - Q_k) & = \left( \frac{1}{2} \bOne_n + \sum_{l \not \in \{j,k\}} Q_{l} \beta_{jk,l} \right)^T (Q_j - Q_k) \\
& = \frac{1}{2} \bOne_n^T (Q_j - Q_k).
}
The last line is due to the compound symmetry of $\bQ^T \bQ$, since all the off-diagonal elements are equal to each other. This proves that $c^m_{jj} = c^m_{kk}$.

We will now show that $c^m_{jk} = c^m_{lr}$, as long as $j \neq k$ and $l \neq r$. An off-diagonal element of $\bC^m$ is
\eq{
c^m_{jk} & = m (X_j - \hat{\mathbf{p}})^T Q_k + m (X_k - \hat{\mathbf{p}})^T Q_j \\
& = m \left[ Q_j^TQ_k + \frac{1}{2} \bOne_n^T (Q_j + Q_k) - \hat{\mathbf{p}}^T(Q_j + Q_k) \right].
}
Showing $c^m_{jk} = c^m_{lr}$ is equivalent to showing
\eq{
Q_j^TQ_k + \frac{1}{2} \bOne_n^T (Q_j + Q_k) - \hat{\mathbf{p}}^T(Q_j + Q_k) & =
 Q_l^TQ_r + \frac{1}{2} \bOne_n^T (Q_l + Q_r) - \hat{\mathbf{p}}^T(Q_l + Q_r) \\
\frac{1}{2} \bOne_n^T (Q_j + Q_k) - \hat{\mathbf{p}}^T(Q_j + Q_k) & =
 \frac{1}{2} \bOne_n^T (Q_l + Q_r) -  \hat{\mathbf{p}}^T(Q_l + Q_r),
}
where the terms cancel because $Q_j^TQ_k = Q_l^TQ_r$. Rearranging the terms,
\eq{
\hat{\mathbf{p}}^T(Q_j - Q_r) - \frac{1}{2} \bOne_n^T (Q_j - Q_r) & =
 \hat{\mathbf{p}}^T(Q_l - Q_k) - \frac{1}{2} \bOne_n^T (Q_l - Q_k),
}
where we can see that both sides equal 0 because, as we have already proven, $c^m_{jj} = c^m_{rr}$ and $c^m_{ll} = c^m_{kk}$.

Therefore, $\bC^m$ is compound symmetric for all $m$. This implies that $\bu = \frac{1}{\sqrt{d}} \bOne_d$ is an eigenvector of $\bC^m$ and (in conjunction with the fact that $\bu^T\bu = 1$) $\bu$ satisfies the first-order optimality conditions (\ref{eq:lpca_deriv}, \ref{eq:ortho}).
\end{proof}

\subsubsection{Theorem \ref{thm:lipschitz}} \label{sec:lipschitz_proof}

Next, we show that the gradient is Lipschitz continuous, for which we need the following Lemma.

\begin{lemma}
\label{lem:shrink}
For all $\theta_1, \theta_2 \in \mathbb{R}$,
\eq{
4 | \sigma(\theta_2) - \sigma(\theta_1) | \leq | \theta_2 - \theta_1 |.
}
\end{lemma}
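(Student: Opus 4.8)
The plan is to reduce the inequality to a uniform bound on the derivative of the sigmoid and then invoke the mean value theorem. First I would record that $\sigma$ is everywhere differentiable with $\sigma'(\theta) = \sigma(\theta)\bigl(1 - \sigma(\theta)\bigr)$, which is exactly the variance of a Bernoulli random variable with success probability $\sigma(\theta)$. The central observation --- already used in the majorization step of Section \ref{sec:3c:MM} --- is that this variance is bounded above by $1/4$: writing $p = \sigma(\theta)$, completing the square gives $p(1-p) = \tfrac14 - (p - \tfrac12)^2 \le \tfrac14$, with equality only at $\theta = 0$. Hence $0 \le \sigma'(\theta) \le \tfrac14$ for all $\theta \in \mathbb{R}$.

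With this uniform derivative bound in hand, I would apply the mean value theorem to $\sigma$ on the interval between $\theta_1$ and $\theta_2$ (the case $\theta_1 = \theta_2$ being trivial): there exists a point $\xi$ strictly between them with $\sigma(\theta_2) - \sigma(\theta_1) = \sigma'(\xi)\,(\theta_2 - \theta_1)$. Taking absolute values and using $\sigma'(\xi) \le \tfrac14$ yields $|\sigma(\theta_2) - \sigma(\theta_1)| \le \tfrac14\,|\theta_2 - \theta_1|$, which rearranges to the claimed inequality $4\,|\sigma(\theta_2) - \sigma(\theta_1)| \le |\theta_2 - \theta_1|$.

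There is no substantial obstacle here; the only point requiring care is the justification of the uniform bound $\sigma' \le 1/4$, and the completion-of-the-square identity makes this immediate. If one prefers to avoid appealing to an unspecified intermediate point, an equivalent route is to write $\sigma(\theta_2) - \sigma(\theta_1) = \int_{\theta_1}^{\theta_2} \sigma'(t)\,dt$ and bound the integrand pointwise by $1/4$; this gives the same conclusion while making the Lipschitz-with-constant-$1/4$ interpretation transparent, and it is exactly the form needed later in the Lipschitz-constant computation for Theorem \ref{thm:lipschitz}.
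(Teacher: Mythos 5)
Your proposal is correct and rests on the same key fact as the paper's proof, namely the uniform bound $\sigma'(\theta)=\sigma(\theta)(1-\sigma(\theta))\le \tfrac14$; the paper packages this as a derivative comparison of the two sides of the inequality starting from $\Delta=0$, which is just the integrated form of your mean-value-theorem argument. No gap.
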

\begin{proof}[Proof of Lemma \ref{lem:shrink}]
Without loss of generality, assume that $\theta_2 \geq \theta_1$. Let $\Delta = \theta_2 - \theta_1 \geq 0$. We want to show that 
\eq{
4 ( \sigma(\theta_1 + \Delta) - \sigma(\theta_1) ) \leq \Delta.
}
The left and right hand sides both equal 0 when $\Delta$ is 0. The derivative with respect to $\Delta$ of the right hand side is 1. The derivative with respect to $\Delta$ of the left hand side is
\eq{
4 \sigma(\theta_1 + \Delta) (1 - \sigma(\theta_1 + \Delta) ) \leq 4 \frac{1}{4} = 1.
}
Since both sides of the inequality are equal to each other at the origin ($\Delta = 0$) and the left hand side increases at a slower rate than the right hand side for $\Delta > 0$, the left hand side is always less than or equal to the right hand side for $\Delta \geq 0$.
\end{proof}

\begin{proof}[Proof of Theorem \ref{thm:lipschitz}]
To prove the Lipschitz continuity, we will work with the squared norm. Let $\hat{\bP}_1$ and $\hat{\bP}_2$ be the matrices of fitted probabilities using $\bH_1$ and $\bH_2$ respectively. First, for the un-symmetrized gradient, letting $\Satnat_c := \Satnat - \bOne_n \bmu^T$, see that
\eq{
\|2 (\hat{\bP}_1 - \bX)^T \Satnat_c - 2 (\hat{\bP}_2 - \bX)^T \Satnat_c \|^2_F & = 4 \|(\hat{\bP}_1^T - \hat{\bP}_2)^T \Satnat_c \|^2_F \\
& \leq 4 \| \Satnat_c \|^2_F \|\hat{\bP}_1 - \hat{\bP}_2 \|^2_F \\
& \leq \frac{1}{4} \| \Satnat_c \|^2_F \| \Satnat_c (\bH_1 - \bH_2) \|^2_F \\
& \leq \left( \frac{\| \Satnat_c \|^2_F}{2} \right)^2 \| \bH_1 - \bH_2 \|^2_F.
}
The first and third inequalities are due to the Cauchy-Schwarz inequality and the second one is due to Lemma \ref{lem:shrink}.

To prove Lipschitz continuity for the symmetrized gradient (removing the $\bX^T \Satnat_c$s which cancel out), check that
\eq{
& 4 \| \hat{\bP}_1^T \Satnat_c + \Satnat_c^T \hat{\bP}_1 - \text{diag}(\hat{\bP}_1^T \Satnat_c) - \hat{\bP}_2^T \Satnat_c - \Satnat_c^T \hat{\bP}_2 + \text{diag}(\hat{\bP}_2^T \Satnat_c) \|_F^2 \\
& ~~ \leq 4 \| \hat{\bP}_1^T \Satnat_c + \Satnat_c^T \hat{\bP}_1 - \hat{\bP}_2^T \Satnat_c - \Satnat_c^T \hat{\bP}_2 \|_F^2 \\
& ~~ = 8 \| (\hat{\bP}_1 - \hat{\bP}_2)^T \Satnat_c \|_F^2 + 
8  \langle (\hat{\bP}_1 - \hat{\bP}_2)^T \Satnat_c, \Satnat_c^T (\hat{\bP}_1 - \hat{\bP}_2) \rangle  \\
& ~~ \leq 8 \| (\hat{\bP}_1 - \hat{\bP}_2)^T \Satnat_c \|_F^2 + 
8 \| (\hat{\bP}_1 - \hat{\bP}_2)^T \Satnat_c \|_F \| \Satnat_c^T (\hat{\bP}_1 - \hat{\bP}_2) \|_F \\
& ~~ = 16 \| (\hat{\bP}_1 - \hat{\bP}_2)^T \Satnat_c \|_F^2 \\
& ~~ \leq 4 \left( \frac{\|\Satnat_c \|_F^2}{2} \right)^2 \| \bH_1 - \bH_2 \|^2_F.
}
The first inequality is true because the difference on the diagonal is doubled without the diagonal correction, the second is due to the Cauchy-Schwarz inequality, and the last is from the previous proof for the un-symmetrized gradient. Therefore, the symmetrized gradient is Lipschitz continuous with the bound stated above.
\end{proof}

\subsection{Further simulation results}
\subsubsection{Number of iterations to convergence}
\label{app:iters}

\begin{figure}[t]
	\centering
		\includegraphics[scale=0.75]{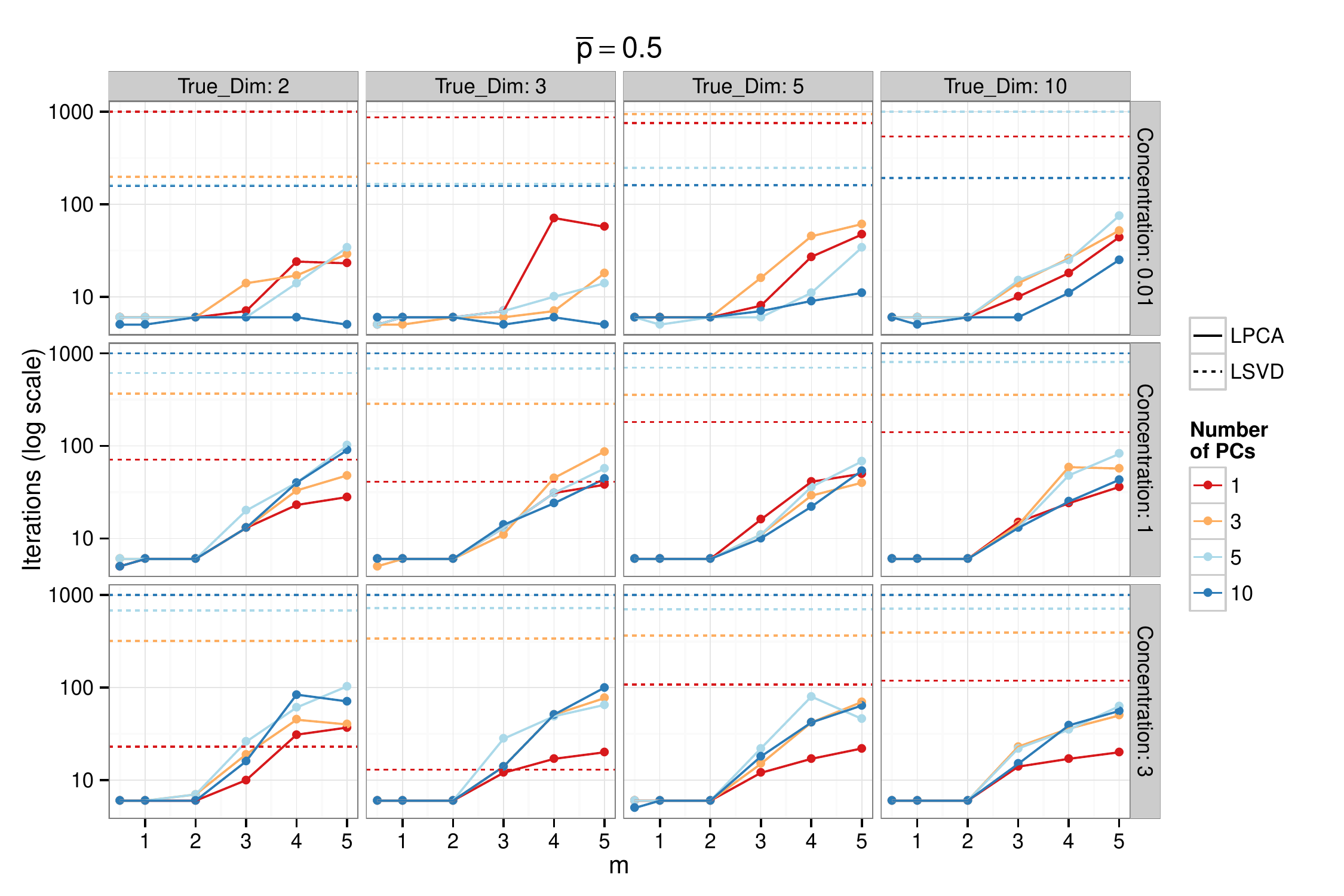}
	\caption{Number of iterations required for convergence of the logistic PCA and logistic SVD algorithms for a variety of scenarios}
	\label{fig:Iters05}
\end{figure}

For the same simulation setup as in Figure \ref{fig:PSqerror05}, we kept track of the number of iterations required until convergence for both the LSVD algorithm and the LPCA algorithm. The initial value of $\bU$ or $\bB$ was the first $k$ right singular vectors of $\bQ$ for both. Each iteration for the two algorithms is comparable in that they involve computing the working variables of the majorization function and finding either an SVD or an eigen-decomposition. Both of the algorithms terminate if the difference of the average deviance between successive iterations is less than $10^{-5}$ or if the maximum number of iterations (1000) is reached.

It is apparent from Figure \ref{fig:Iters05} that the number of iterations required for convergence is a lot less for LPCA than for LSVD, in general, and in many cases LSVD reaches the maximum. (Note that the y-axis is on the log scale.) For $\phi \in \{1,3\}$, the number of iterations for LSVD increases as the number of estimated components increases. Also, as $m$ increases, the number of iterations for LPCA increases, in general.

We believe that LSVD requires more iterations to converge because there are more parameters for it to optimize for, so there are many little updates that can be made to the parameters which improve the deviance enough. One explanation for why it takes LPCA with large $m$ longer to converge is that small changes in $\bU$ can have a relatively large effect in the estimated natural parameters with large $m$.

\subsubsection{Simulation results for more sparse situation}
\label{app:sim01}

\begin{figure}[t]
	\centering
		\includegraphics[scale=0.75]{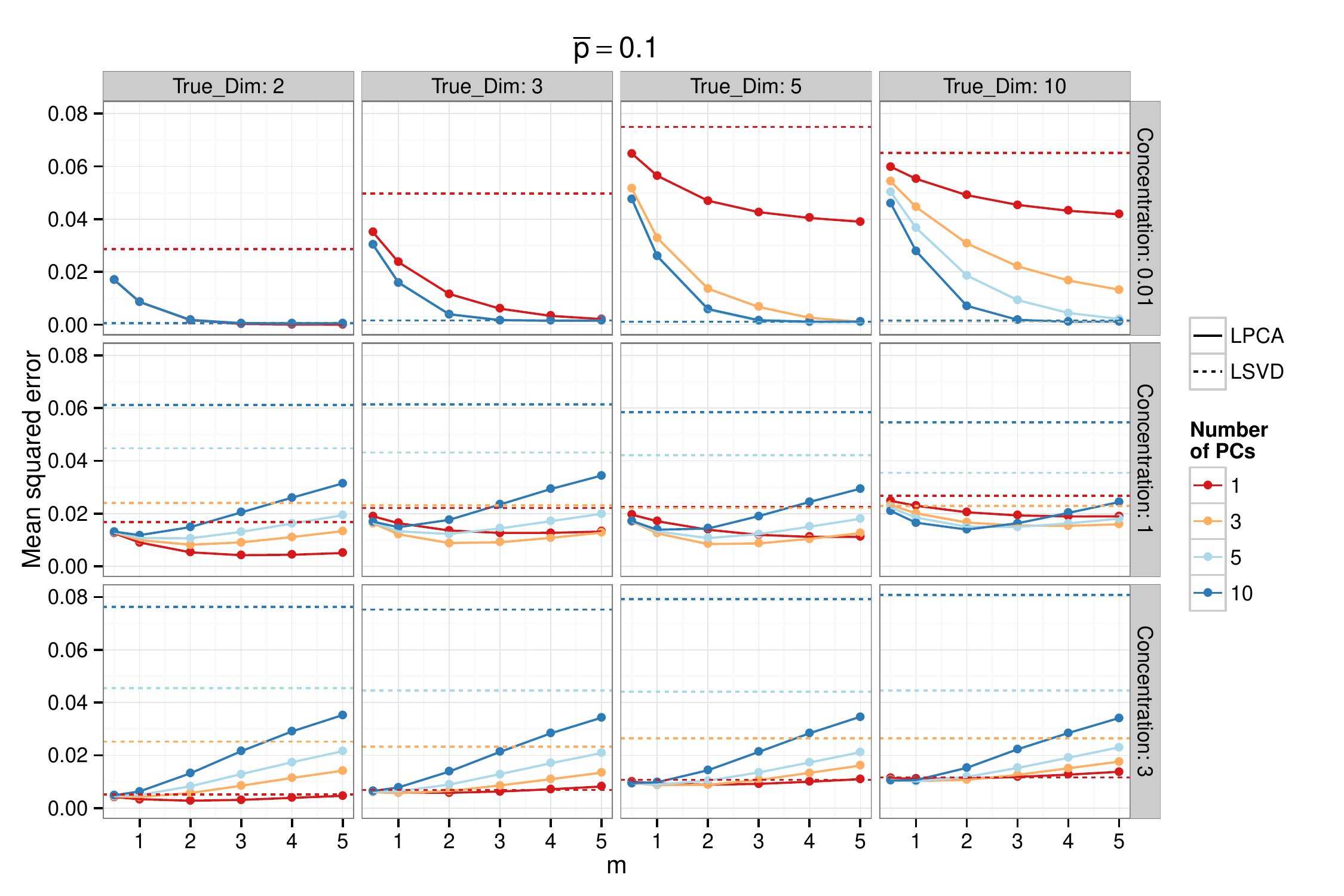}
	\caption{Mean squared error of probability estimates when $\bar{p}=0.1$ for a variety of scenarios}
	\label{fig:PSqerror01}
\end{figure}

Just like in Section \ref{sec:4sim:Probs}, we simulated from a variety of scenarios, only this time $\bar{p}=0.1$, for a situation where the binary matrix consists of mostly 0's. The estimated LSVD and LPCA included main effects. The results in Figure \ref{fig:PSqerror01} are mostly the same as with $\bar{p}=0.5$ except that an estimate of the probability matrix with a rank lower than the true rank does not degrade performance as badly.

\subsubsection{Deviance comparison of LPCA and LSVD}
\label{app:sim:dev}

\begin{figure}[t]
	\centering
		\includegraphics[scale=0.75]{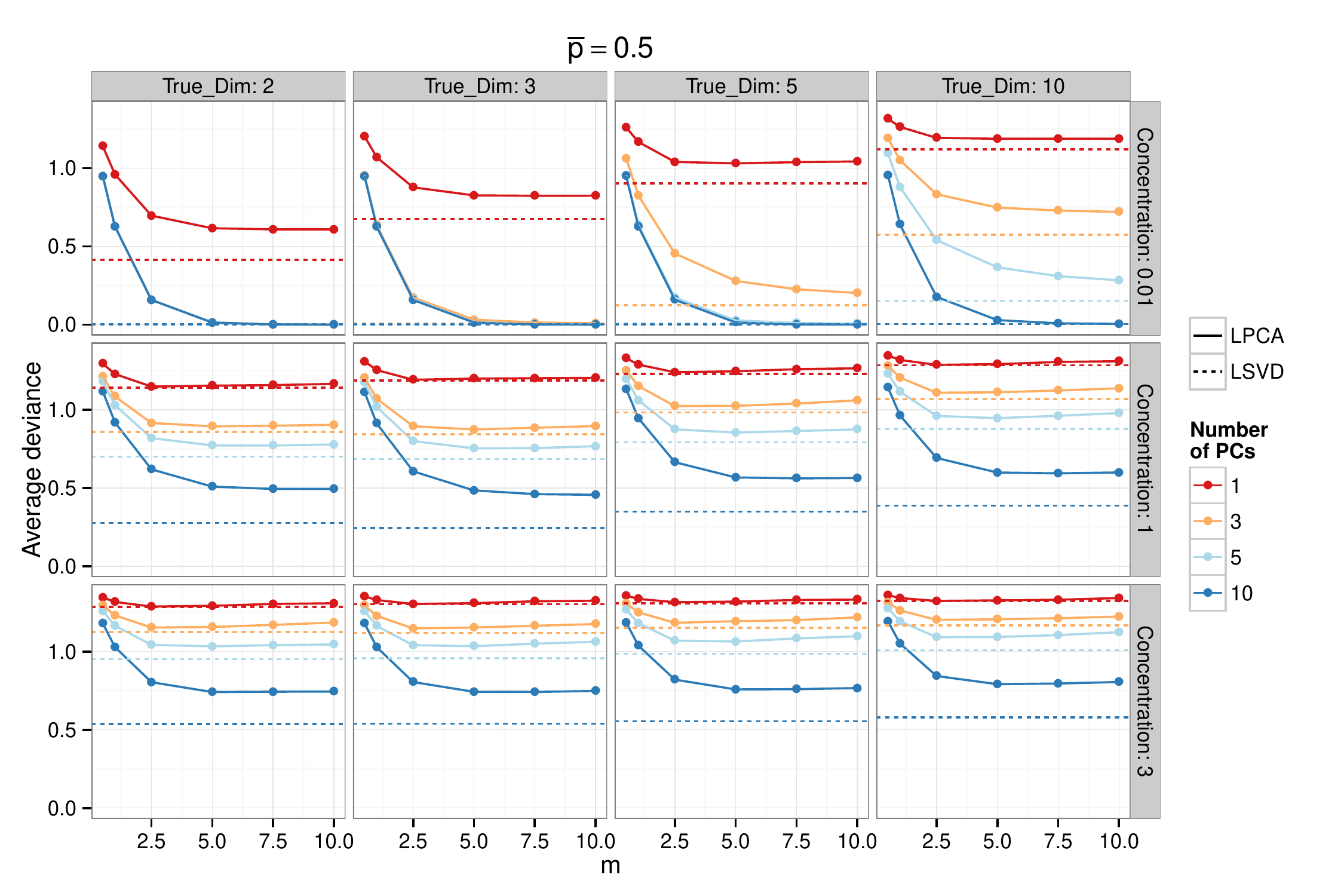}
	\caption{Average deviance of probability estimates when $\bar{p}=0.5$ for a variety of scenarios}
	\label{fig:Deviances05}
\end{figure}

Finally, under the same setup as before with $\bar{p}=0.5$, Figure \ref{fig:Deviances05} shows the deviance of the estimates. As expected, LSVD has lower deviance or equivalent deviance for all scenarios. The improvement from LSVD is smallest when $\phi = 0.01$. When the probabilities are more concentrated, however, LSVD clearly has a lower deviance than LPCA. It also appears that the higher the estimated rank is, the greater the decrease in the deviance is from LSVD. 

\bibliographystyle{chicago}
\bibliography{logistic_pca}

\end{document}